\PassOptionsToPackage{table}{xcolor}
\documentclass{article}
\usepackage{iclr2027_conference}
\usepackage{newtxtext}

\usepackage{amsmath,amssymb,amsthm}
\usepackage{mathtools}
\usepackage{bm}
\usepackage{booktabs}
\usepackage{array}
\usepackage{tabularx}
\usepackage{microtype}
\usepackage[table]{xcolor}
\usepackage{multirow}
\usepackage{graphicx}
\usepackage{algorithm}
\usepackage{algorithmic}
\usepackage{wrapfig}
\usepackage{enumitem}
\usepackage{threeparttable}
\usepackage{caption}
\usepackage{placeins}
\makeatletter
\def\section{\@startsection{section}{1}{\z@}{-1.4ex plus -0.4ex minus
  -.2ex}{0.9ex plus 0.2ex minus 0.2ex}{\large\sc\raggedright}}
\def\subsection{\@startsection{subsection}{2}{\z@}{-1.2ex plus -0.3ex
  minus -.2ex}{0.4ex plus .2ex}{\normalsize\sc\raggedright}}
\makeatother
\usepackage{tcolorbox}
\tcbuselibrary{breakable,skins}
\usepackage{longtable}
\usepackage{titletoc}
\usepackage{subcaption}
\usepackage{url}
\usepackage{hyperref}
\usepackage{cleveref}

\newtheorem{theorem}{Theorem}
\newtheorem{proposition}{Proposition}
\newtheorem{corollary}{Corollary}
\newtheorem{lemma}{Lemma}
\newtheorem{definition}{Definition}
\newtheorem{remark}{Remark}
\newtheorem{assumption}{Assumption}
\crefname{assumption}{Assumption}{Assumptions}
\Crefname{assumption}{Assumption}{Assumptions}

\newcommand{\cA}{\mathcal{A}}

\newcommand{\cD}{\mathcal{D}}
\newcommand{\cX}{\mathcal{X}}

\newcommand{\cG}{\mathcal{G}}
\newcommand{\phiTrue}{\phi^{*}}
\newcommand{\phiModel}{\phi_{\theta}}
\newcommand{\dtv}{D_{\mathrm{TV}}}
\newcommand{\dkl}{D_{\mathrm{KL}}}
\newcommand{\Keff}{K_{\mathrm{eff}}}
\newcommand{\E}{\mathbb{E}}

\newcommand{\argmin}{\operatorname{arg\,min}}

\newcommand{\cN}{\mathcal{N}}

\definecolor{bestbg}{HTML}{CFE0F3}
\definecolor{secondbg}{HTML}{FBE4D5}
\newcommand{\best}[1]{\cellcolor{bestbg}\textbf{#1}}
\newcommand{\second}[1]{\cellcolor{secondbg}\underline{#1}}

\title{Rethinking the Evaluation and Optimization of LLM-Based Social Simulation\thanks{Project page: \url{https://yupei-wang.github.io/rethinking-social-simulation/}}}

\author{Pei Wang \quad Xu Chen \quad Ji-Rong Wen\\
Gaoling School of Artificial Intelligence, Renmin University of China\\
\texttt{wang\_pei@ruc.edu.cn} \quad \texttt{xu.chen@ruc.edu.cn} \quad \texttt{jrwen@ruc.edu.cn}
}

\iclrfinalcopy
\begin{document}
\maketitle

\begin{abstract}
LLM-based social simulation is a promising complement to traditional
social science methods such as surveys and behavioral experiments. A
core question in this area is how to evaluate the fidelity of
LLM-simulated human behavior and, in turn, how to optimize LLMs toward
it. Prevailing practice evaluates by accuracy, checking whether the
model selects the single response observed from a human, and
accordingly trains the LLM to reproduce this one hard label. However,
human behavior is inherently subjective: the same person in the same
situation may reasonably act in different ways, so an observed
response is only one draw from an underlying response distribution,
which renders accuracy-based evaluation unreliable and hard-label
training misleading. To address these problems, in this paper, we first introduce the
\emph{subjectivity coefficient}, an entropy-based quantity that
distinguishes objective tasks such as coding from subjective tasks such
as social simulation, and then use it to systematically analyze how
accuracy-based evaluation and hard-label training fail as subjectivity
grows. Based on the subjectivity
coefficient, we further propose
\emph{Subjectivity-Adaptive soft-Label Training} (SALT): it pools
observed outputs from semantically nearby inputs into soft
distributional labels, with an aggregation radius adapted to the
estimated subjectivity of each input; in the near-objective limit the
neighborhood shrinks, so SALT naturally falls back to standard
single-label training. Moreover, since
existing datasets record only single observed responses and thus cannot
support distributional evaluation, we construct \textsc{SubjSim}, a
benchmark of 19,300 contexts covering 193 annotators and 100 subjective
questions. Since real-world data typically provide only a single
observation per input, our experiments train models from single observed
outputs while evaluating them against the full response distributions,
thereby verifying the feasibility of our method in realistic settings.
Extensive results on \textsc{SubjSim} demonstrate the advantages of our
method.
\end{abstract}


\section{Introduction}
\label{sec:intro}

LLM-based social simulation holds great potential for social science
research, offering a low-cost, controllable, and scalable way to study
problems such as opinion dynamics, political polarization, market
behavior, and policy interventions
\citep{lu2025beyond,wang2025user,gao2024agentscope,hua2023war,
park2023generative,sasahara2021social,bail2018exposure}. The core question in this area is how to evaluate the fidelity of
LLM-simulated behavior
\citep{dillion2023can,dominguez2024questioning,hu2026simbenchbenchmarkingabilitylarge,
wang2025sociobenchmodelinghumanbehavior,samuel2025personagymevaluatingpersonaagents},
and then, under the established criterion, how to train the model
toward it.
Most existing work approaches this question by directly following the
practice of objective domains such as coding and math: collect one
response for each person on each question, fine-tune the model to
reproduce that response as the correct label, and check with
accuracy-style metrics whether the model picks the same answer.

For objective tasks this recipe is perfectly sound: there is a
well-defined correct answer, so a single label fully specifies the
target, reproducing the label is exactly the desired behavior, and
accuracy faithfully measures success. However, social simulation breaks this
premise.
Human behavior is stochastic rather than deterministic
\citep{fleeson2001toward,camerer1997progress,slovic1995construction,
zaller1992nature,ratcliff1978theory,mcfadden1972conditional,
luce1959individual}: the same person may act differently across
comparable occasions, and people with similar profiles may choose
different actions. An observed response is therefore only one draw from
an underlying response distribution. A single label no longer specifies
the target. During training, reproducing it collapses the model onto
one action; at test time, whether the model hits that answer depends
partly on chance rather than on how faithful the simulation is.

To study the above problems systematically, we first define the
\emph{subjectivity coefficient}, an entropy-based measure of the
inherent randomness of human responses in a given decision context;
under this
definition, objective tasks (e.g., coding) have a near-zero
subjectivity coefficient, while subjective tasks (e.g., social
simulation) have a much higher one. Based
on the subjectivity coefficient, we then formally show that the common
recipe of accuracy evaluation plus single-label training breaks down as
subjectivity grows. For evaluation, the recorded answer is itself only one random draw
from the response distribution, so whether the model matches it
reveals little about fidelity; for training, fitting the single label
pulls the model away from the true response distribution. Finally, we propose
\emph{Subjectivity-Adaptive soft-Label Training} (SALT). For each
context, SALT collects the answers observed at semantically similar
contexts that share the same options and turns them into a soft label,
and the neighborhood size automatically adapts to how subjective the
context is estimated to be. When a context is nearly objective, the
neighborhood shrinks and SALT falls back to standard single-label
training, so hard-label supervision is recovered as the
low-subjectivity special case.

Evaluating distributional fidelity requires reference human response
distributions, but existing social-simulation datasets typically record only
a single response for each context. We therefore construct \textsc{SubjSim}, a
benchmark in which 193 annotators answer 100 subjective survey
questions, yielding 19,300 annotator-question pairs. Each pair carries an elicited
response-propensity distribution obtained through probability-ball
allocation. During training, models never see these distributions and
receive only one derived hard action per pair; at test time, the
full distributions are used to measure more accurately how faithfully
the model reproduces human behavior. With Qwen3-8B as the backbone, SALT reduces aggregate KL
divergence over standard supervised fine-tuning (SFT) by 77.6\%, JSD by 45.9\%, TVD by
31.2\%, and MMD by 52.7\%, with the largest gains in mid- and
high-subjectivity settings, where one observed response reveals the
least about the full distribution.

In summary, this paper makes contributions at three levels.
(i)~At the problem level, we identify why the prevailing accuracy-based
evaluation and training practice is unreasonable for social simulation,
and rigorously analyze this failure by introducing the subjectivity
coefficient. (ii)~At the method level, we propose SALT, which
aggregates observations from semantically similar contexts into soft
distributional labels with a subjectivity-adaptive radius.
(iii)~At the data level, we construct \textsc{SubjSim}, a benchmark of
19,300 annotator-question pairs with human-annotated response
distributions, and extensive experiments on it validate the
effectiveness of SALT.

\section{Preliminaries}
\label{sec:preliminaries}

\subsection{Formulation of Social Simulation}
\label{subsec:formulation}
We describe each individual in the target population by a persona
vector $u$, which encodes demographic features, personality traits,
prior beliefs, or other personal attributes. A decision context is
the tuple $x := (u, s)$, where $s$ is a situational description; we
write $\cX$ for the space of all decision contexts. At
context $x$, the individual chooses from a candidate action set
$\mathcal{A}(x) = \{a^{(1)}, \ldots, a^{(K)}\}$, where each action is a
natural-language description and $K = |\mathcal{A}(x)|$ may vary across
contexts. For example, $u$ may describe a 26-year-old graduate
student living in a big city, $s$ may ask which factor matters most
when choosing a job, and the corresponding action set is
$\cA(x) = \{\text{``salary''}, \text{``stability''},
\text{``personal interest''}, \text{``work--life balance''}\}$. In
another example, $u$ may describe a retired teacher living in a small
town, $s$ may ask how the individual mainly gets news, and
$\cA(x) = \{\text{``social media''}, \text{``television''},
\text{``news apps''}\}$.
Social simulation aims to use an LLM to mimic how these individuals
respond and behave.

\subsection{The Ideal Optimization Target}
\label{subsec:ideal}
Defining the ideal optimization target requires first understanding
the nature of human behavior. Many social science theories have shown that human behavior is not
deterministic but contains an inherent random component. Random
utility theory \citep{luce1959individual,mcfadden1972conditional},
stochastic evidence-accumulation models~\citep{ratcliff1978theory}, and
preference construction research~\citep{slovic1995construction} all
model a person's choice as a draw from a probability distribution
rather than as a fixed answer. This randomness is easy to see in daily
life: the same person may pick a different dish from the same menu on
different days, answer the same survey question differently when asked
twice~\citep{zaller1992nature}, cooperate in one round of an economic
game and defect in the next~\citep{camerer1997progress}, or be
talkative at one gathering and quiet at
another~\citep{fleeson2001toward}. This stochasticity is part of the
target itself rather than label noise. Ideally, the model should
therefore capture the uncertainty of human behavior: for each context,
it should reproduce not only which action is most likely, but the full
probability mass assigned to all plausible actions. Formally, the target is the latent
response-propensity distribution
$\phiTrue(\cdot \mid x) \in \Delta(\mathcal{A}(x))$, namely the action
distribution that would be observed if the same context could be
measured repeatedly under comparable conditions, where
$\Delta(\mathcal{A}(x)) = \{(q_1, \ldots, q_K) : q_k \geq 0,
\sum_k q_k = 1\}$ is the probability simplex over the available
actions and we write $p_k := \phiTrue(k \mid x)$ for brevity. On the
model side, an LLM with parameters $\theta$ induces an action-level
distribution $\phiModel(\cdot \mid x) \in \Delta(\mathcal{A}(x))$,
obtained by normalizing the generation probabilities of the $K$
candidate actions, with $\phiModel(k \mid x)$ written analogously. The
ideal goal of social simulation is to learn a model whose induced
action distribution is close to $\phiTrue$ across contexts:
\begin{equation}
\label{eq:objective}
\hat{\theta} = \argmin_\theta \mathbb{E}_x\!\left[D\!\left(\phiTrue(\cdot \mid x),\; \phiModel(\cdot \mid x)\right)\right],
\end{equation}
where $D$ measures mismatch on the action simplex and can be
instantiated as various distance functions, e.g., total variation
$\dtv(P, Q) := \frac{1}{2}\sum_{k=1}^K |P_k - Q_k|$, cross entropy
$D_{\mathrm{CE}}(P, Q) := -\sum_{k} P_k \log Q_k$, or KL divergence
$\dkl(P \Vert Q)$.

\subsection{The Current Accuracy-Based Practice}
\label{subsec:practice}
In practice, $\phiTrue$ is not observable: each context is recorded
only once, yielding a dataset $\mathcal{D} = \{(x^i, a(x^i))\}$ in
which $a(x) \in \mathcal{A}(x)$ is the single observed action, either
one stochastic draw from $\phiTrue(\cdot \mid x)$ or a single-action
proxy derived from it. Let $j(x)$ denote the index of $a(x)$ and
$\bm{\delta}_{a(x)}$ the one-hot distribution that puts all mass on
$a(x)$. Prevailing practice trains and evaluates against this single
observation. On the training side, the model is trained to maximize
the probability it assigns to the observed action:
\begin{equation}
  \label{eq:sft-objective}
  \tilde{\theta}
  = \arg\max_\theta \mathbb{E}_x\!\left[\log \phiModel(j(x) \mid x)\right]
  = \argmin_\theta \mathbb{E}_x\!\left[D_{\mathrm{CE}}\!\left(\bm{\delta}_{a(x)},\; \phiModel(\cdot \mid x)\right)\right],
\end{equation}
which is exactly objective~\eqref{eq:objective} with the unobservable
target $\phiTrue(\cdot \mid x)$ replaced by the one-hot vector
$\bm{\delta}_{a(x)}$ and $D$ taken as the cross entropy. On the
evaluation side, accuracy checks whether the model's top-probability
action coincides with the same single observation:
\begin{equation}
  \label{eq:accuracy}
  \mathrm{Acc}(\theta)
  = \mathbb{E}_x\!\left[\mathbf{1}\!\left[\textstyle\arg\max_k \phiModel(k \mid x) = j(x)\right]\right],
\end{equation}
replacing the divergence in \eqref{eq:objective} by a 0--1 comparison
between the model's mode and the single draw. A natural question
arises: how much do \eqref{eq:sft-objective} and \eqref{eq:accuracy}
actually capture of the ideal objective~\eqref{eq:objective}? In the
next section, we answer this question by rigorously analyzing the gap
between them.


\section{Misalignment Analysis of the Accuracy-Based Practice}
\label{sec:analysis}

\subsection{Evaluation Metric Analysis}
\label{subsec:eval-failure}
Ideally, an evaluation metric should be consistent with true model
quality: a model that scores better under the metric should also be
closer to the true response distribution under the ideal
objective~\eqref{eq:objective}. We now check whether
accuracy~\eqref{eq:accuracy} has this property.

Consider ranking two
models $\theta_1$ and $\theta_2$ by accuracy at a context $x$ with
observed index $j = j(x)$, and let $k_1$ and $k_2$ denote the two
models' top actions. There are three cases.

(i)~$k_1 = k_2$: the two models receive the same score. However, the
true error $\dtv(\phi_{\theta_i}, \phiTrue)$ depends on all $K$
probability values rather than the top action alone, and the hidden
gap can be nearly maximal. Suppose $\theta_1$ matches the target
exactly, $\phi_{\theta_1} = \phiTrue$, while $\theta_2$ puts all its
mass on the shared top action $k_1$, which is then the mode of
$\phiTrue$ with $p_{k_1} = p_{\max}$. Their true errors are
\begin{equation}
  \label{eq:case1-gap}
  \dtv(\phi_{\theta_1}, \phiTrue) = 0,
  \qquad
  \dtv(\phi_{\theta_2}, \phiTrue)
  = \frac{1}{2}\Bigl[(1 - p_{\max})
  + \sum\nolimits_{k \neq k_1} p_k\Bigr]
  = 1 - p_{\max},
\end{equation}
which differ by $1 - p_{\max}$ and reach $1 - 1/K$ under uniform
behavior, yet accuracy scores the two models identically.

(ii)~$k_1 \neq k_2$ and neither equals $j$: both models score zero,
and accuracy again cannot tell them apart. As in case~(i), consider a
pair of models where one is perfect and the other is a point-mass
model. The observed action $j$ is one random draw from $\phiTrue$ and
may well miss the most probable action, so even the perfect model
$\phi_{\theta_1} = \phiTrue$, whose
top action $k_1$ is the most probable one, can fall into this case,
while $\theta_2$ puts all its mass on another unobserved action,
$\phi_{\theta_2} = \bm{\delta}_{k_2}$. Their true errors are
\begin{equation}
  \label{eq:case2-gap}
  \dtv(\phi_{\theta_1}, \phiTrue) = 0,
  \qquad
  \dtv(\phi_{\theta_2}, \phiTrue)
  = \frac{1}{2}\Bigl[(1 - p_{k_2})
  + \sum\nolimits_{k \neq k_2} p_k\Bigr]
  = 1 - p_{k_2},
\end{equation}
so a perfect model and a model with error $1 - p_{k_2}$ receive the
same zero score.

(iii)~$k_1 \neq k_2$ and exactly one equals $j$, say $k_1 = j$:
accuracy ranks $\theta_1$ higher, and this is the only case where it
expresses a preference. However, the preference can be exactly
backward. Suppose $\theta_2$ is the perfect model,
$\phi_{\theta_2} = \phiTrue$, with its top action $k_2$ the most
probable one, while $\theta_1$ puts all its mass on the draw
$j \neq k_2$. Their true errors are
\begin{equation}
  \label{eq:case3-tie}
  \dtv(\phi_{\theta_1}, \phiTrue)
  = \frac{1}{2}\Bigl[(1 - p_j)
  + \sum\nolimits_{k \neq j} p_k\Bigr]
  = 1 - p_j
  \;>\; 0
  = \dtv(\phi_{\theta_2}, \phiTrue),
\end{equation}
so whenever the single draw misses the most probable action, accuracy
prefers a strictly worse model over the perfect one, and the reversed
gap $1 - p_j$ grows as behavior becomes more diffuse.

In summary, the above counterexamples show that accuracy is not a
reliable measure of behavioral fidelity: models with the same score
can differ substantially in true quality, and in the
worst case accuracy even prefers a strictly worse model over the
perfect one. A better accuracy score therefore does not imply a
model closer to the true response distribution.

\subsection{Training Objective Analysis}
\label{subsec:opt-failure}

We next examine the training objective~\eqref{eq:sft-objective}. Its
empirical signal at each training context is the one-hot target
$\bm{\delta}_{a(x)}$: the objective is monotonically increasing in the
probability assigned to the observed action, so its optimum drives
$\phiModel(j \mid x) \to 1$ on the training context, with no gradient
signal that rewards distributing mass across multiple actions.

\begin{proposition}[Single-observation training concentrates on one-hot labels]
  \label{prop:all-opt}
  Consider a finite training set in which each context $x^i$ is observed
  once with hard label $j^i$. If the closure of the model-induced
  action distributions contains the probability simplex independently at
  each training context (an idealized full-capacity condition), then any empirical-risk minimizer $\tilde{\theta}$ of
  objective~\eqref{eq:sft-objective} in this closure assigns
  $\phi_{\tilde{\theta}}(k \mid x^i) = \mathbf{1}[k = j^i]$ for every
  training context~$x^i$. In other words, the closer a model is trained toward
  the minimum loss, the closer its predicted distribution comes to
  putting probability one on the single observed answer at each
  training context.
\end{proposition}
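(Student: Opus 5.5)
The plan is to reduce the empirical risk to a sum of independent per-context terms and minimize each one over the simplex. Write the empirical version of objective~\eqref{eq:sft-objective} as
\[
\hat{L}(\theta) = \frac{1}{n}\sum_{i=1}^{n} -\log \phiModel(j^i \mid x^i).
\]
This depends on $\theta$ only through the tuple $(\phiModel(\cdot\mid x^1),\ldots,\phiModel(\cdot\mid x^n))$. Under the full-capacity hypothesis, the closure of the set of such tuples contains $\prod_i \Delta(\cA(x^i))$, since each coordinate can be chosen in the simplex independently of the others. So minimizing $\hat{L}$ over this closure is the same as minimizing
\[
\frac{1}{n}\sum_i -\log q^i_{j^i}
\]
over independent points $q^i \in \Delta(\cA(x^i))$. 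We extend $-\log$ to $[0,1]$ with value $+\infty$ at $0$, so the objective is well-defined on the closed product of simplices.

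The separable problem is then handled one context at a time. The map $q^i \mapsto -\log q^i_{j^i}$ is strictly decreasing in the single coordinate $q^i_{j^i}$ and does not depend on the other coordinates. On the simplex, $q^i_{j^i} \le 1$, with equality exactly when $q^i = \bm{\delta}_{j^i}$. Hence each term has infimum $0$, attained only at the one-hot vector. A tuple fails to minimize the sum as soon as any single term fails to minimize its own summand: that term alone is strictly positive, while the other terms can be set to zero. So every empirical-risk minimizer in the closure satisfies $\phi_{\tilde\theta}(k\mid x^i) = \mathbf{1}[k=j^i]$ for all $i$.

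For the ``in other words'' clause I would give a quantitative bound. If $\hat{L}(\theta) \le \varepsilon$, then each term satisfies $-\log \phiModel(j^i\mid x^i) \le n\varepsilon$. This gives $\phiModel(j^i\mid x^i) \ge e^{-n\varepsilon}$, and therefore
\[
\dtv\bigl(\phiModel(\cdot\mid x^i), \bm{\delta}_{j^i}\bigr) = 1 - \phiModel(j^i\mid x^i) \le 1 - e^{-n\varepsilon} \to 0,
\]
so approaching the minimum loss forces the predictions toward the one-hot labels.

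The main obstacle is a subtlety about attainment and about the same context appearing twice. One-hot outputs are typically not reachable by a softmax model for any finite $\theta$; this is exactly why the statement is phrased in terms of the closure, and why the per-term infimum $0$ is attained only there. The proof must therefore argue about minimizers in the closure, and use the $\varepsilon$-bound above to cover actual parameters. Separately, I would note that the indices $x^i$ are distinct contexts, since each is ``observed once.'' This is what makes the per-context decomposition legitimate: if two labels sat on the same context, the minimizer would instead be their empirical frequency.
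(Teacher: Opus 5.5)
Your proof is correct and follows essentially the same route as the paper. Both split the empirical risk into independent per-context terms $-\log \phiModel(j^i\mid x^i)$, use that each is uniquely minimized at the one-hot vector in the closure, and handle finite softmax models by a limiting argument; your explicit bound $\dtv\bigl(\phiModel(\cdot\mid x^i),\bm{\delta}_{j^i}\bigr)\le 1-e^{-n\varepsilon}$ is a slightly sharper, quantitative version of that limit, and your remark about repeated contexts matches the paper's closing note that repeated observations would instead give the empirical label frequencies.
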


\noindent
The proof is given in \Cref{app:proof-all-opt}.
Under this point-mass solution, the total-variation error at a
training context is:
\begin{equation}
  \label{eq:tv-error}
  \dtv\!\left(\phi_{\tilde{\theta}}, \phiTrue\right)
  = \frac{1}{2}\sum_{k} \bigl|\mathbf{1}[k=j] - p_k\bigr|
  = 1 - p_j,
\end{equation}
which equals the total probability mass that $\phiTrue$ assigns to
actions other than the one observed. Note that the observed answer $j$
is just one random draw: it tends to be a likely option but is not
always the most likely one, so the error $1 - p_j$ is at least
$1 - p_{\max}$. Even in the most favorable case where the draw hits
the most likely option, diffuse behavior forces every probability,
including the largest, to be small, so the error grows with
subjectivity and reaches $(K-1)/K$ at the uniform distribution.

The analysis above indicates, qualitatively, that the error of the
single-label optimum is ultimately governed by how diffuse the
response distribution is. To describe this relation quantitatively, we
introduce a scalar that summarizes the dispersion of the entire
distribution.

\begin{definition}[Subjectivity coefficient]
  \label{def:kappa}
  The \emph{subjectivity coefficient} of a decision context~$x$ is the
  negentropy of the target distribution:
  \begin{equation}
    \label{eq:kappa}
    \kappa(x)
    \;\coloneqq\;
    \sum_{k=1}^{K} p_k \log p_k
    \;\in\; [-\log K,\; 0],
  \end{equation}
  where $p_k = \phiTrue(k \mid x)$.
\end{definition}

\noindent
The coefficient equals zero when $\phiTrue$ is a point mass (fully
deterministic behavior) and $-\log K$ when it is uniform (maximum
ambiguity).
Throughout, we mainly work with the normalized subjectivity coefficient
\begin{equation}
  \label{eq:subjectivity-score}
  s(x) \coloneqq -\frac{\kappa(x)}{\log K}
  = \frac{H(\phiTrue(\cdot\mid x))}{\log K}
  \in [0,1],
\end{equation}
where $H$ is the Shannon entropy, so larger $s(x)$ means more subjective behavior.
Crucially, $s$ is a property of the decision context~$x$,
not of any model: it characterizes how inherently subjective a given
scenario is.
Under this view, objective tasks such as coding and math occupy the
near-zero-$s$ regime, subjective simulation tasks span the full
range, and different task types differ only in their degree of
subjectivity along this common axis.
Entropy is the standard measure of how uncertain a distribution is,
and it takes all $K$ probabilities into account rather than a single
one. We now show that the training error grows with $s$ through an
explicit lower bound.

\begin{proposition}[Subjectivity lower-bounds the error of
  single-label training]
  \label{prop:kappa-unified}
  Let $p_{\max} = \max_k p_k$, $s = s(x)$, and $K \geq 3$.
  Any point-mass model incurs training error
  \begin{equation}
    \dtv(\phi_\theta, \phiTrue)
    \;\geq\; 1 - p_{\max}
    \;\geq\; \frac{s \log K - \log 2}{\log(K-1)},
    \label{eq:kappa-structural}
  \end{equation}
  where the first inequality follows from \Cref{eq:tv-error} since
  $p_j \leq p_{\max}$, and the second follows from Fano's inequality
  (Appendix~\ref{app:proof-kappa-unified}). Hence the more subjective
  the context, i.e., the larger $s$, the larger the unavoidable error
  of any point-mass model; the lower bound is approximately $s$ for
  large $K$, so the unavoidable error is at least roughly the
  normalized subjectivity of the context. At the uniform distribution,
  where $s = 1$, the exact error $1 - p_{\max}$ equals $1 - 1/K$.
\end{proposition}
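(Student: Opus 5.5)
The plan has two steps: reduce the total-variation error of any point-mass model to the mass outside one action, then bound that mass from below by the entropy via Fano's inequality.

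\textbf{First inequality.} A point-mass model has $\phi_\theta = \bm{\delta}_{k}$ for some index $k$. The computation in \Cref{eq:tv-error} applies verbatim with $j$ replaced by $k$, which gives
\[
\dtv(\bm{\delta}_k, \phiTrue) = \tfrac12\bigl[(1-p_k) + \textstyle\sum_{l\neq k} p_l\bigr] = 1 - p_k .
\]
Since $p_k \le p_{\max}$, we get $1-p_k \ge 1-p_{\max}$. This holds for the SFT optimum of \Cref{prop:all-opt}, where $k=j$, and equally for any other point mass.

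\textbf{Second inequality.} Set $\epsilon := 1-p_{\max}$ and let $k^\star$ be a mode of $\phiTrue$. Treat $\phiTrue$ as the law of a random variable $Y$ on $K$ values, and consider the constant estimator $\hat Y = k^\star$, whose error probability is exactly $\epsilon$. I would not invoke Fano as a black box; I would reprove it directly with the grouping rule for entropy:
\[
H(\phiTrue) = h(\epsilon) + (1-\epsilon)\cdot 0 + \epsilon\, H\!\left(\frac{(p_l)_{l\neq k^\star}}{\epsilon}\right)
\le h(\epsilon) + \epsilon \log(K-1).
\]
The conditional distribution on the remaining actions has only $K-1$ atoms, which gives the $\log(K-1)$ factor. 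The degenerate case $\epsilon=0$ is trivial, since then the entropy is zero.

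Next use $h(\epsilon)\le \log 2$ and $H(\phiTrue) = s\log K$ from \Cref{eq:subjectivity-score}. This yields $s\log K \le \log 2 + \epsilon\log(K-1)$. The assumption $K\ge 3$ guarantees $\log(K-1)>0$, so we may divide by it, which gives exactly \Cref{eq:kappa-structural}.

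\textbf{Remarks and expected obstacle.} For large $K$, $\log K/\log(K-1)\to 1$ and the $\log 2$ term is negligible, so the bound is approximately $s$. At the uniform distribution, $p_{\max}=1/K$, so the exact error is $1-1/K$. The only step needing care is the grouping identity and justifying the $K-1$ support size; everything else is algebra. I would also note that the bound may be negative, and hence vacuous, for small $s$; this is harmless because it is only a lower bound.
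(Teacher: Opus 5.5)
Your proposal is correct and matches the paper's proof. The first inequality comes from $\dtv(\bm{\delta}_k,\phiTrue)=1-p_k\ge 1-p_{\max}$. The second comes from Fano's inequality for the mode guess, $H\le h(\epsilon)+\epsilon\log(K-1)$, combined with $h(\epsilon)\le\log 2$ and division by $\log(K-1)>0$ for $K\ge 3$. The only difference is that you derive this constant-estimator case of Fano from the entropy grouping rule rather than citing it, which makes the argument self-contained without changing its substance.
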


The essential cause behind the failures on both the evaluation and the
training side is the same: one observed answer per context carries too
little information about a diffuse response distribution. Ideally, if each context were annotated many
times, the empirical answer frequencies would recover the true
distribution and both problems would disappear. This is unrealistic,
however: reliable model ranking alone would require
$\Omega(K^{2s})$ repeated observations per context
(Appendix~\ref{app:proof-sample-complexity}), while real data provide
exactly one. A classical alternative is to directly merge the observed
answers of similar contexts, as in local smoothing
\citep{nadaraya1964estimating,watson1964smooth}. Merging, however,
involves a trade-off: pooling more neighbors supplies more
distributional information, but neighbors are only similar rather than
identical, so pooling also mixes in answers from different
distributions and biases the target. In the next section, we analyze
this trade-off theoretically and derive an algorithm with a
context-adaptive merging radius.

\section{Subjectivity-Adaptive Soft-Label Training (SALT)}
\label{sec:method}

SALT replaces each one-hot target with a soft label aggregated from
semantically similar contexts, and chooses the neighborhood size to
balance the trade-off above.
Concretely, given $\mathcal{D}=\{(x^i,a(x^i))\}_{i=1}^{n}$, SALT
outputs a soft label $\hat{\phi}(\cdot\mid x)$ for each context. Because
actions are comparable only within a shared candidate set, we first
partition contexts into action-space groups:
\begin{equation}
  \label{eq:action-partition}
  \cG_g \;=\; \bigl\{x \in \cX : \cA(x) = \cA_g\bigr\},
  \qquad g = 1, \ldots, G,
\end{equation}
where $\cA(x)$ is the candidate action set of context~$x$ defined in
\Cref{subsec:formulation} and $\cA_g$ is the shared candidate set of
group~$g$; in survey data, for instance, all contexts answering the
same question with the same options form one group.
Within each group, contexts are embedded with a pretrained encoder and
compared by $\ell_2$ distance~$d$. The neighborhood of $x\in\cG_g$ is
\begin{equation}
  \label{eq:neighborhood}
  \cN(x) \;=\; \bigl\{x' \in \cG_g : d(x', x) \leq r(x)\bigr\},
\end{equation}
where the radius $r(x)$ is adaptive; how to choose it is the key
design question, addressed below. Since all neighbors share $\cA_g$, their hard
actions define a local empirical distribution:
\begin{equation}
  \label{eq:soft-label}
  \hat{\phi}(k \mid x)
  \;\coloneqq\;
  \frac{1}{|\cN(x)|}
  \sum_{x' \in \cN(x)} \mathbf{1}[a(x') = a^{(k)}],
  \qquad k = 1, \ldots, K,
\end{equation}
and the model is trained to match this distributional target:
\begin{equation}
  \label{eq:agg-loss}
  \mathcal{L}^{\mathrm{agg}}_\theta
  \;=\; \sum_{x \in \cD}
    \dkl\!\left(\hat{\phi}(\cdot \mid x) \;\big\|\;
    \phiModel(\cdot \mid x)\right),
\end{equation}
where $\phiModel(\cdot \mid x) \in \Delta(\cA(x))$ is obtained by
normalizing generation probabilities across candidates. Note that this
differs from prior grouping-based methods
\citep{huang-etal-2026-distribution, cao2025specializing}, which
partition samples into disjoint groups (e.g., by demographic
attributes) and let all samples in a group share one target
distribution, whereas SALT centers a neighborhood at each context, so
every context receives its own soft label.

The key remaining question is how to determine the radius $r(x)$. To
answer it, we bound the error of aggregation within one action-space
group $\cG_g$, writing $d_{\cX}$ for the
intrinsic dimension of its context space, $n_g=|\cG_g|$ for the number of
contexts in the group, and $K$ for the number of actions. The
dimension enters through the volume of a neighborhood: a ball of
radius $r$ in a $d_{\cX}$-dimensional space holds a fraction
$\asymp r^{d_{\cX}}$ of the contexts, so
$|\cN(x)| \asymp n_g \cdot r^{d_{\cX}}$. The bound rests on a smoothness assumption that formalizes
the intuition behind SALT, namely that semantically nearby contexts
induce similar response distributions; it is stated as an
$L$-Lipschitz condition in the embedding distance $d$
(\Cref{assump:lipschitz} in Appendix~\ref{app:aggregation-lemmas}).
The statistical part of the bound depends on how many actions carry
substantial probability, captured by the effective action count.

\begin{definition}[Effective number of actions]
  \label{def:keff}
  For a context $x$, the \emph{effective number of actions} is
  \begin{equation}
    \label{eq:keff-def}
    \Keff^{*}(x) \;\coloneqq\;
    \Bigl(\sum_{k=1}^{K} \sqrt{\phiTrue(k \mid x)}\Bigr)^{2}
    \;\in\; [1, K],
  \end{equation}
  which attains its lower endpoint when $\phiTrue(\cdot \mid x)$ is a
  point mass and its upper endpoint when $\phiTrue(\cdot \mid x)$ is
  uniform. It further satisfies $\Keff^{*}(x) \geq K^{s(x)}$, so it
  increases with the subjectivity of the context; this bound and the
  remaining properties used below are established in
  \Cref{remark:keff-properties}.
\end{definition}

Combining the decomposition with statistical and bias bounds yields the
main tradeoff.

\begin{theorem}[Aggregation--Estimation Tradeoff]
  \label{thm:main}
  Suppose $\phiTrue$ is $L$-Lipschitz in the embedding distance $d$
  (Appendix~\ref{app:aggregation-lemmas}). With
  $|\cN(x)| \asymp n_g \cdot r^{d_{\cX}}$, where $\asymp$
  ($\lesssim$) denotes equality (inequality) up to constant factors,
  let
  $\varepsilon_{\mathrm{opt}}
  \coloneqq \E[\dtv(\phiModel(\cdot\mid x),\hat{\phi}(\cdot\mid x))]$.
  The expected error satisfies
  \begin{equation}
    \label{eq:total-bound}
    \E\!\left[
      \dtv\!\left(\phiModel(\cdot \mid x),\;
      \phiTrue(\cdot \mid x)\right)
    \right]
    \;\lesssim\;
    \underbrace{\varepsilon_{\mathrm{opt}}}_{\text{opt.\ error}}
    \;+\;
    \underbrace{L \cdot r}_{\text{bias}}
    \;+\;
    \underbrace{\sqrt{\frac{\Keff^{*}(x)}{n_g \cdot r^{d_{\cX}}}}
    }_{\text{stat.\ error}}.
  \end{equation}
  The bias increases in $r$ while the statistical error decreases in
  $r$ (since $|\cN(x)|$ grows with $r^{d_\cX}$).
  When the aggregation loss is optimized so that
  $\varepsilon_{\mathrm{opt}}$ is negligible, balancing the remaining two
  terms gives the bias--variance optimized radius and the corresponding
  optimized upper bound:
  \begin{equation}
    \label{eq:optimal-r}
    r^* \;\asymp\; \left(\frac{\Keff^{*}(x)}{n_g L^2}
    \right)^{\!1/(d_{\cX}+2)},
    \qquad
    \E[\dtv]
    \;\lesssim\;
    \varepsilon_{\mathrm{opt}} \;+\;
    \left(\frac{L^{d_{\cX}} \cdot \Keff^{*}(x)}{n_g}
    \right)^{\!1/(d_{\cX}+2)}.
  \end{equation}
  In the worst case ($\Keff^{*}(x) = K$), the bias--statistical term becomes
  $(L^{d_{\cX}} K / n_g)^{1/(d_{\cX}+2)}$.
\end{theorem}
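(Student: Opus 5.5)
The plan is to apply the triangle inequality for total variation through two intermediate distributions. The first is the soft label $\hat{\phi}(\cdot\mid x)$. The second is the population neighborhood average $\bar{\phi}(\cdot\mid x) \coloneqq \frac{1}{|\cN(x)|}\sum_{x'\in\cN(x)} \phiTrue(\cdot\mid x')$, i.e.\ the conditional mean of $\hat{\phi}$ given the neighborhood. This gives
\begin{equation*}
\dtv(\phiModel,\phiTrue) \le \dtv(\phiModel,\hat{\phi}) + \dtv(\hat{\phi},\bar{\phi}) + \dtv(\bar{\phi},\phiTrue(\cdot\mid x)),
\end{equation*}
and taking expectations yields the three terms of \eqref{eq:total-bound}. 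The first term is $\varepsilon_{\mathrm{opt}}$ by definition, so all the work lies in the other two.

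For the bias term, I would use convexity of $\dtv$ in its first argument. This gives $\dtv(\bar{\phi},\phiTrue(\cdot\mid x)) \le \frac{1}{|\cN(x)|}\sum_{x'} \dtv(\phiTrue(\cdot\mid x'),\phiTrue(\cdot\mid x))$. By the $L$-Lipschitz assumption (\Cref{assump:lipschitz}), each summand is at most $L\, d(x',x) \le L r$. This step is routine.

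For the statistical term, I would condition on the neighborhood. Given $\cN(x)$, the labels $a(x')$ are independent draws with $\mathbf{1}[a(x')=a^{(k)}]$ of mean $\phiTrue(k\mid x')$. Hence $\hat{\phi}(k\mid x)$ has mean $\bar{\phi}(k\mid x)$ and variance at most $\bar{\phi}(k\mid x)/|\cN(x)|$. Jensen's inequality applied coordinatewise then gives $\E|\hat{\phi}_k-\bar{\phi}_k| \le \sqrt{\bar{\phi}_k/|\cN(x)|}$. Summing over $k$ gives $\E[\dtv(\hat{\phi},\bar{\phi})] \le \tfrac12 \bigl(\sum_k\sqrt{\bar{\phi}_k}\bigr)/\sqrt{|\cN(x)|} = \tfrac12\sqrt{\bar{K}_{\mathrm{eff}}/|\cN(x)|}$, where $\bar{K}_{\mathrm{eff}}$ is the effective action count of $\bar{\phi}$.

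The main obstacle is replacing $\bar{K}_{\mathrm{eff}}$ by $\Keff^{*}(x)$. I would first try the bound $|\sqrt{a}-\sqrt{b}|\le\sqrt{|a-b|}$, which gives $\sum_k\sqrt{\bar{\phi}_k} \le \sum_k\sqrt{\phiTrue(k\mid x)} + \sum_k\sqrt{|\bar{\phi}_k-\phiTrue(k\mid x)|}$. By Cauchy--Schwarz the second sum is at most $\sqrt{2K\cdot Lr}$. The correction $\sqrt{2KLr/|\cN(x)|}$ is of lower order when $Lr$ is small, so it can be absorbed into the constants hidden by $\lesssim$. Alternatively, I would use the properties collected in \Cref{remark:keff-properties} if they include such a continuity statement. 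Substituting $|\cN(x)|\asymp n_g r^{d_{\cX}}$ completes \eqref{eq:total-bound}.

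For the optimized radius, I would set $Lr = \sqrt{\Keff^{*}/(n_g r^{d_{\cX}})}$. This gives $r^{d_{\cX}+2} = \Keff^{*}/(n_g L^2)$, which is \eqref{eq:optimal-r}. Plugging back in yields $Lr^* = (L^{d_{\cX}+2}\Keff^{*}/(n_g L^2))^{1/(d_{\cX}+2)} = (L^{d_{\cX}}\Keff^{*}/n_g)^{1/(d_{\cX}+2)}$. Finally, the worst case follows from $\Keff^{*}\le K$ in \Cref{def:keff}.
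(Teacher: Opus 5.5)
Your proposal is correct and follows the paper's proof essentially step for step. It uses the same three-term triangle inequality through $\hat{\phi}$ and $\bar{\phi}$ (\Cref{lem:decomp}), the same Bernoulli-variance-plus-Jensen and convexity-plus-Lipschitz bounds (\Cref{lem:bounds}), and the same balancing of $Lr$ against the statistical term; the paper differentiates instead of equating the two terms, which changes only constants. Your replacement of $\Keff(\bar{\phi})$ by $\Keff^{*}(x)$ is exactly the continuity bound \eqref{eq:keff-continuity} that the paper invokes informally. One caveat: absorbing the correction $\sqrt{2KLr}$ into $\sqrt{\Keff^{*}(x)}$ strictly needs $KLr \lesssim \Keff^{*}(x)$ rather than merely $Lr$ small. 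Alternatively, you can assume $Lr \le 1$ without loss of generality and allow $K$-dependent constants, which is the same level of rigor the paper accepts.
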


\noindent
The proof is given in \Cref{app:proof-main}. The result has two
implications. First, with negligible optimization error, oracle
aggregation improves as $n_g$ grows, whereas a local point-mass fit still
incurs error at least $1-p_{\max}$ on the context
(\Cref{eq:kappa-structural}). Second, $r^*$ increases with
$\Keff^{*}(x)$: the more subjective a context, the larger its
neighborhood should be.

\paragraph{Practical implementation of the oracle radius.}
\Cref{thm:main} gives an oracle radius
$r^* \asymp (\Keff^{*}(x)/(n_g L^2))^{1/(d_\cX+2)}$.
Since $\Keff^{*}(x)$ is defined through $\phiTrue(\cdot \mid x)$, it is
unobserved in training. In practice we evaluate the \emph{same}
expression at the model's current distribution, which gives the
model-based estimate
$\hat{K}_{\mathrm{eff}}(x) \coloneqq
\bigl(\sum_k \sqrt{\phiModel(k \mid x)}\bigr)^2$.
This is reasonable for two reasons. The aggregation loss
(\Cref{eq:agg-loss}) is itself a distribution-matching objective, so it
drives $\phiModel(\cdot \mid x)$ toward $\phiTrue(\cdot \mid x)$, and
the expression is continuous in the distribution, so the estimate
returns to the true count as the optimization error vanishes. Exact
recovery is moreover not required, because $r(x)$ depends on the count
only through the power $1/(d_{\cX}+2)$, so a multiplicative error in
the count is damped into a much smaller relative change in the radius.
\Cref{remark:plugin-keff} makes both precise.
The Lipschitz constant $L$ and other constants are absorbed into a
tunable hyperparameter $C$, giving
\begin{equation}
  \label{eq:adaptive-radius-practical}
  r(x) \;=\; C \cdot \left(\frac{\hat{K}_{\text{eff}}(x)}{n_g}\right)^{1/(d_{\mathcal{X}}+2)}.
\end{equation}
The estimate is refreshed periodically during training. This substitution preserves the
oracle bound's monotone dependence on action dispersion, and we evaluate
it through fixed-$\Keff$, fixed-neighborhood, and radius-scale ablations.
The theory thus does not merely justify aggregation; it specifies when
to aggregate more: contexts with diffuse predicted behavior require
larger neighborhoods to reduce statistical error, while concentrated
contexts should remain close to their observed hard label to avoid
unnecessary smoothing. In the zero-subjectivity limit,
$\hat{K}_{\mathrm{eff}}(x) \to 1$ and the radius contracts toward its minimum, so
the soft label concentrates on the observed action and SALT approaches
standard hard-label training. Hard-label supervision is therefore
recovered as the low-subjectivity special case of SALT. More details and
the complete algorithm can be found in Appendix~\ref{app:ca-details}.

\section{Experiments}
\label{sec:experiments}

\subsection{SubjSim Benchmark}
\label{sec:data}
Evaluating distributional fidelity requires the true response
distributions as ground truth, which existing single-response datasets
cannot provide. We therefore construct \textsc{SubjSim}, where 193
annotators first answer 30 demographic questions that define their
persona vectors and then annotate 100 subjective survey questions,
yielding 19,300 persona-question contexts. For each context, the annotator allocates
probability balls across the candidate options \citep{van1993eli,
delavande2010eliciting}, producing a response-propensity distribution
that is used only for evaluation. All training methods, including
SALT's aggregation, see just one answer per context: the option
receiving the largest share of that annotator's balls, with ties
resolved in favor of the lowest-indexed option. The questions are
organized into eight topic domains: economy, politics, technology,
social issues, culture, health, environment, and education. Their
subjectivity coefficients
cover the full range from near-deterministic to near-uniform, enabling
evaluation across low, mid, and high subjectivity regimes (see
\Cref{fig:dataset_stats}; construction details in
Appendix~\ref{sec:dataset}).

\subsection{Experimental Setup}
\label{subsec:exp-setup}
We train on the samples of about 85\% of the respondents and test on
those of the remaining ones, which gives 16,500 training pairs and
2,800 test pairs.
All experiments use Qwen3-8B \citep{yang2025qwen3} as the backbone.
We compare SALT with the untrained zero-shot backbone, SFT, DPO, PPO, and DSA
\citep{huang-etal-2026-distribution}. DPO constructs preference pairs and
PPO derives reward signals from the hard labels described in
\Cref{sec:data}, so that all methods observe identical training
information. DSA is the most recent distribution-level
baseline, which fine-tunes the LLM to match the response distributions
of demographic groups and to align distribution shifts across groups.
At test time, models output distributions
over candidate options via generation-probability normalization. We
report KL, JSD, TVD, and
linear-kernel MMD computed on the full test set; see
Appendix~\ref{app:ca-details} for further experimental settings.

\subsection{Main Results}
\label{subsec:exp-main}

The main results are shown in
\Cref{tab:main-results}. The pretrained backbone performs poorly, and
neither DPO nor PPO improves over it; DPO in fact degrades KL
substantially. We attribute this to the construction of the preference
signal from a single hard action: the observed action serves as the
positive and the remaining options as negatives, yet under subjective
behavior these options may themselves carry substantial probability, so
responses the respondent might well choose are suppressed as negatives,
making the resulting signal even noisier than fitting the label
directly. SFT is the strongest baseline, suggesting that directly
fitting the observed labels already captures a substantial part of the
underlying behavioral structure. DSA ranks between the preference-based
methods and SFT, clearly outperforming the former, which confirms the
benefit of distribution-alignment training. However, DSA groups
respondents by only a few discrete background attributes, and all
individuals within a group share a single target distribution; the
supervision is therefore noisy at the individual level, leaving DSA
behind SFT.
SALT controls this noise with its adaptive radius, pooling only
sufficiently similar contexts while still collecting enough answers
for a reliable distribution estimate. It performs best
on all four metrics in every domain, and over the full test set it
reduces KL over SFT by 77.6\%,
JSD by 45.9\%, TVD by 31.2\%, and MMD by 52.7\%. A paired bootstrap over
test contexts confirms that all of these improvements are statistically
significant (Appendix~\ref{app:significance}).

\begin{table*}[!t]
\centering
\caption{Main results on SubjSim. All metrics are divergences (lower is better). Best values are in \textbf{bold} on a \colorbox{bestbg}{blue} background; second-best values are underlined on a \colorbox{secondbg}{sand} background.}
\label{tab:main-results}
\setlength{\tabcolsep}{3.0pt}
\renewcommand{\arraystretch}{0.95}
\scriptsize
\resizebox{\textwidth}{!}{%
\begin{tabular}{lcccccccccccc}
\toprule
& \multicolumn{4}{c}{\textbf{Economy}} & \multicolumn{4}{c}{\textbf{Political}} & \multicolumn{4}{c}{\textbf{Technology}} \\
\cmidrule(lr){2-5}\cmidrule(lr){6-9}\cmidrule(lr){10-13}
\textbf{Method} & KL$\downarrow$ & JSD$\downarrow$ & TVD$\downarrow$ & MMD$\downarrow$ & KL$\downarrow$ & JSD$\downarrow$ & TVD$\downarrow$ & MMD$\downarrow$ & KL$\downarrow$ & JSD$\downarrow$ & TVD$\downarrow$ & MMD$\downarrow$ \\
\midrule
Pretrained & 4.6285 & 0.2851 & 0.5735 & 0.6346 & 4.1453 & 0.3044 & 0.6206 & 0.6586 & 4.2015 & 0.2547 & 0.5335 & 0.5502 \\
SFT & \second{1.3543} & \second{0.1542} & \second{0.3951} & \second{0.3283} & \second{1.2865} & \second{0.1568} & \second{0.4145} & \second{0.3167} & \second{1.3409} & \second{0.1537} & \second{0.3956} & \second{0.3215} \\
DPO & 6.9962 & 0.2752 & 0.5573 & 0.6049 & 7.2926 & 0.2834 & 0.5816 & 0.5861 & 6.9174 & 0.2580 & 0.5305 & 0.5450 \\
PPO & 4.0161 & 0.2672 & 0.5495 & 0.5863 & 4.4698 & 0.2726 & 0.5724 & 0.5684 & 4.2959 & 0.2502 & 0.5251 & 0.5310 \\
DSA & 3.2032 & 0.2332 & 0.4920 & 0.4337 & 3.4679 & 0.2442 & 0.5161 & 0.4436 & 3.4744 & 0.2337 & 0.4888 & 0.4275 \\
\addlinespace[1pt]
SALT (Ours) & \best{0.3129} & \best{0.0892} & \best{0.2839} & \best{0.1640} & \best{0.2535} & \best{0.0724} & \best{0.2473} & \best{0.1247} & \best{0.2817} & \best{0.0815} & \best{0.2736} & \best{0.1466} \\
\midrule
& \multicolumn{4}{c}{\textbf{Social}} & \multicolumn{4}{c}{\textbf{Culture}} & \multicolumn{4}{c}{\textbf{Health}} \\
\cmidrule(lr){2-5}\cmidrule(lr){6-9}\cmidrule(lr){10-13}
\textbf{Method} & KL$\downarrow$ & JSD$\downarrow$ & TVD$\downarrow$ & MMD$\downarrow$ & KL$\downarrow$ & JSD$\downarrow$ & TVD$\downarrow$ & MMD$\downarrow$ & KL$\downarrow$ & JSD$\downarrow$ & TVD$\downarrow$ & MMD$\downarrow$ \\
\midrule
Pretrained & 4.2286 & 0.2720 & 0.5562 & 0.6079 & 3.0456 & 0.2445 & 0.5175 & 0.5646 & 4.7412 & 0.2822 & 0.5737 & 0.6467 \\
SFT & \second{1.3259} & \second{0.1588} & \second{0.3977} & \second{0.3354} & \second{0.8578} & \second{0.1209} & \second{0.3459} & \second{0.2739} & \second{1.5169} & \second{0.1566} & \second{0.3886} & \second{0.3286} \\
DPO & 6.8092 & 0.2608 & 0.5289 & 0.5629 & 5.6298 & 0.2385 & 0.5050 & 0.5398 & 6.5058 & 0.2501 & 0.5152 & 0.5366 \\
PPO & 4.2388 & 0.2524 & 0.5210 & 0.5471 & 3.5294 & 0.2278 & 0.4909 & 0.5180 & 4.4975 & 0.2409 & 0.5059 & 0.5215 \\
DSA & 3.2049 & 0.2334 & 0.4963 & 0.4537 & 4.1157 & 0.2623 & 0.5259 & 0.5762 & 3.8155 & 0.2433 & 0.4889 & 0.4561 \\
\addlinespace[1pt]
SALT (Ours) & \best{0.3168} & \best{0.0905} & \best{0.2863} & \best{0.1656} & \best{0.2593} & \best{0.0746} & \best{0.2652} & \best{0.1548} & \best{0.2995} & \best{0.0859} & \best{0.2793} & \best{0.1601} \\
\midrule
& \multicolumn{4}{c}{\textbf{Environment}} & \multicolumn{4}{c}{\textbf{Education}} & \multicolumn{4}{c}{\textbf{ALL}} \\
\cmidrule(lr){2-5}\cmidrule(lr){6-9}\cmidrule(lr){10-13}
\textbf{Method} & KL$\downarrow$ & JSD$\downarrow$ & TVD$\downarrow$ & MMD$\downarrow$ & KL$\downarrow$ & JSD$\downarrow$ & TVD$\downarrow$ & MMD$\downarrow$ & KL$\downarrow$ & JSD$\downarrow$ & TVD$\downarrow$ & MMD$\downarrow$ \\
\midrule
Pretrained & 5.3499 & 0.3091 & 0.6199 & 0.6506 & 6.1097 & 0.3662 & 0.6842 & 0.8295 & 4.2801 & 0.2851 & 0.5745 & 0.6350 \\
SFT & \second{0.8800} & \second{0.1373} & \second{0.3837} & \second{0.2685} & \second{2.0897} & 0.2324 & 0.5246 & 0.4818 & \second{1.2871} & \second{0.1524} & \second{0.3953} & \second{0.3194} \\
DPO & 7.2403 & 0.3099 & 0.6163 & 0.6573 & 8.4950 & 0.3276 & 0.6291 & 0.7074 & 6.8819 & 0.2690 & 0.5497 & 0.5774 \\
PPO & 3.5411 & 0.2914 & 0.6035 & 0.6206 & 5.1310 & 0.3337 & 0.6447 & 0.7245 & 4.1393 & 0.2588 & 0.5403 & 0.5578 \\
DSA & 3.0895 & 0.2116 & 0.4613 & 0.3496 & 2.7515 & \second{0.2260} & \second{0.4870} & \second{0.4152} & 3.3904 & 0.2343 & 0.4937 & 0.4418 \\
\addlinespace[1pt]
SALT (Ours) & \best{0.2983} & \best{0.0870} & \best{0.2705} & \best{0.1407} & \best{0.3866} & \best{0.1095} & \best{0.3229} & \best{0.1947} & \best{0.2880} & \best{0.0825} & \best{0.2720} & \best{0.1510} \\
\bottomrule
\end{tabular}%
 }
\end{table*}

\begin{figure*}[!t]
\centering
\includegraphics[width=\textwidth]{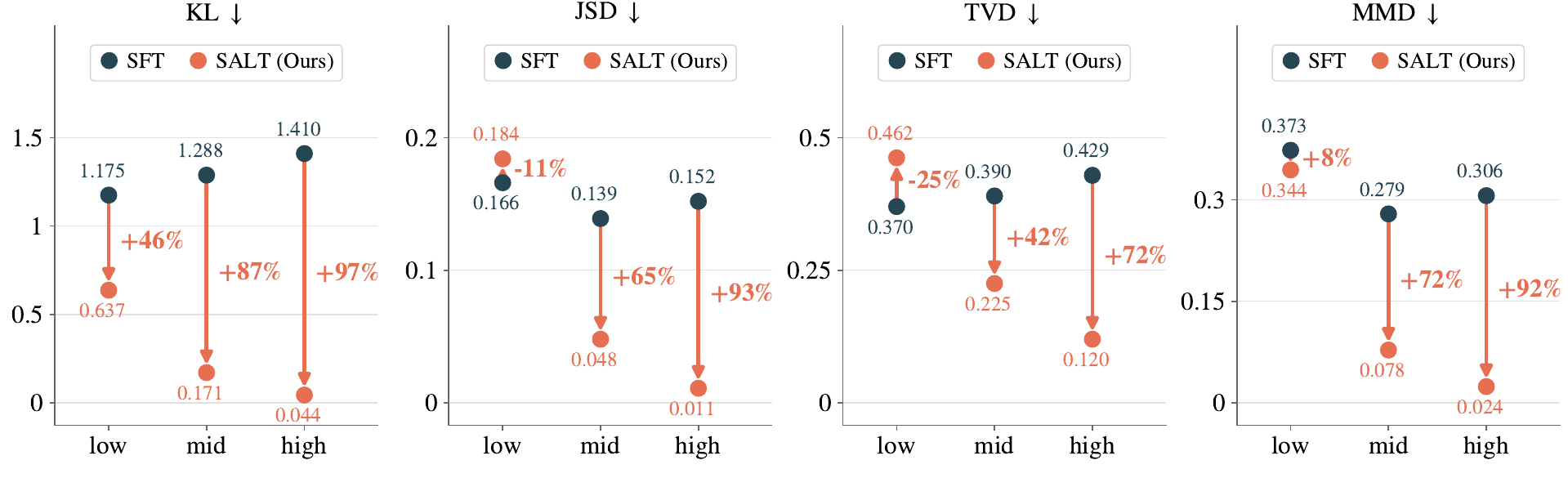}
\caption{Subjectivity-stratified comparison between SFT and SALT on
\textsc{SubjSim}, over low-, mid-, and high-subjectivity questions.
All metrics are divergences (lower is better), and percentages give
SALT's relative change over SFT.}
\label{fig:subjectivity-levels}
\end{figure*}

\subsection{Performance Comparison across Subjectivity Levels}
\label{subsec:exp-subjectivity}
A core design of SALT is that the aggregation is calibrated by the
estimated subjectivity of each context, and we now examine
experimentally whether this design indeed leads to better
performance. We compute
the normalized subjectivity coefficient $s(x)$
in~\eqref{eq:subjectivity-score} directly from the annotated
response-propensity distributions. We
then partition the
questions into low, mid, and high strata at the $1/3$ and $2/3$
quantiles of $s(x)$ and compare SFT and SALT within each stratum
(\Cref{fig:subjectivity-levels}). The results match the analysis in
\Cref{sec:analysis}. SFT is competitive in near-deterministic contexts
but degrades steadily as targets become diffuse, whereas SALT's
advantage grows with subjectivity; in the mid and high strata SALT
outperforms SFT on all metrics, and in the high stratum it reduces KL
by 96.9\%. In the lowest-subjectivity regime the comparison is mixed,
with SALT improving KL and MMD but falling behind SFT on JSD and TVD. This is what we would expect,
because such questions have a clear majority answer, so there is little
for aggregation to add and pooling neighbors can only blur a target
that is already sharp.

\subsection{Ablation Studies}
\label{sec:ablation}
The core design of SALT is the context-adaptive merging radius, so we
ask whether such adaptivity is really necessary. We build three
baselines that differ from SALT only in how neighbors are selected:
Top-$N$ gives every context the same number of nearest neighbors; fixed
$K_{\text{eff}}$ replaces $\hat{K}_{\text{eff}}(x)$ in the radius
formula by one shared constant, so every context gets the same radius;
Global-Freq drops similarity altogether and treats everyone who
answered the same question as a neighbor.

From the results shown in \Cref{fig:ablation}, we draw three
conclusions. First, every variant improves substantially over SFT,
which shows that aggregation is already useful on its own because it
replaces a single observed answer with a distributional target.
Second, how neighbors are selected matters: Top-$N$ degrades as $N$
grows and dissimilar contexts enter the neighborhood, fixed
$K_{\text{eff}}$ stops improving and stays worse than SALT on every
metric, and the adaptive $\hat{K}_{\text{eff}}(x)$ is best on all four.
Third, Global-Freq is clearly worse than SALT, so similarity is
necessary: pooling everyone who answered the same question mixes
dissimilar respondents and biases the soft label.

\begin{figure}[t]
\centering
\includegraphics[width=0.82\linewidth]{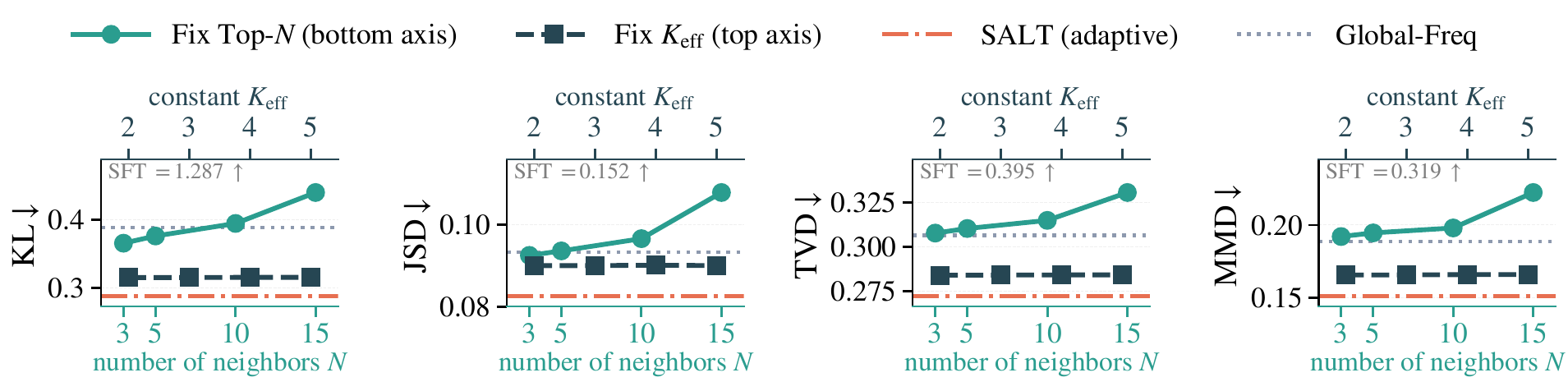}
\caption{Ablation over neighbor selection. Top-$N$ varies $N$ (teal,
bottom axis); fixed $K_{\text{eff}}$ substitutes a constant for
$\hat{K}_{\text{eff}}(x)$ (navy, top axis). Both share the $y$ axis
and reference lines; SFT lies far above the plotted range and is
reported as text.}
\label{fig:ablation}
\end{figure}

\begin{figure}[t]
\centering
\includegraphics[width=0.82\linewidth]{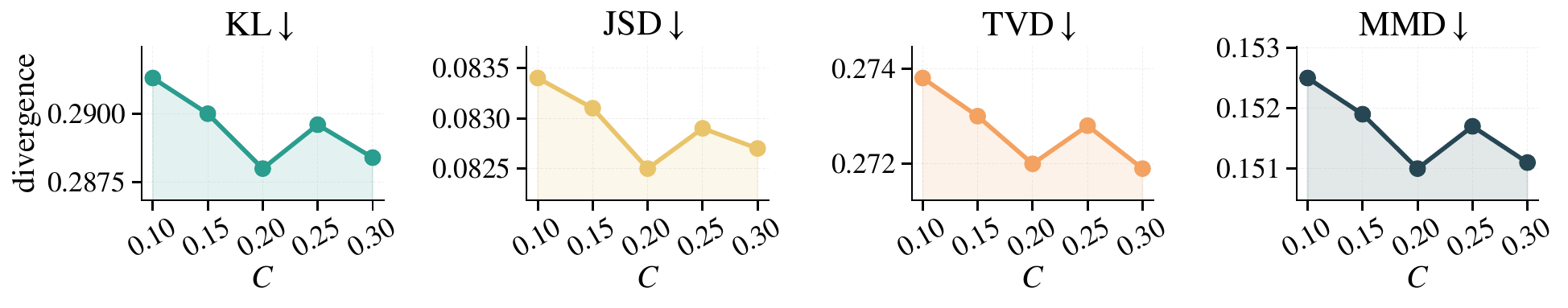}
\caption{Sensitivity of SALT to the radius scale $C$ (divergences,
lower is better).}
\label{fig:sensitivity-C}
\end{figure}

\FloatBarrier

\subsection{Parameter Analysis}
\label{subsec:param-analysis}
The main hyperparameter of SALT is the radius scale $C$
in~\eqref{eq:adaptive-radius-practical}. The theory in
\Cref{sec:method} fixes only how the radius should scale with
$\hat{K}_{\mathrm{eff}}(x)$ and $n_g$; the remaining constants,
including the Lipschitz constant $L$, are absorbed into $C$, which is
therefore chosen empirically. \Cref{fig:sensitivity-C} reports
performance for
$C \in \{0.10,0.15,0.20,0.25,0.30\}$. All metrics improve
monotonically as $C$ increases from 0.10 to 0.20: a larger radius
admits more genuinely similar neighbors, so the aggregated soft labels
become more reliable. Beyond 0.20 the improvement stops and fluctuates
slightly, as a wider radius includes less similar contexts and mild
oversmoothing offsets the gains; the trade-off is best around
$C = 0.20$, which we use as the default. Importantly, the variation
across the whole range is small---about 0.003 in KL and at most 0.002
in the other metrics---so SALT does not rely on a finely tuned radius
constant.

\section{Conclusion}
\label{sec:conclusion}
In this paper, we studied how to evaluate and train LLMs for social
simulation when human behavior is subjective. We introduced the
subjectivity coefficient, which places objective and subjective tasks
on a common axis and explains why accuracy-based evaluation and
hard-label training become unreliable as subjectivity grows. We then
proposed SALT, which turns each observed response into a soft
distributional label aggregated over a subjectivity-adaptive
neighborhood, and built \textsc{SubjSim}, whose annotators provide
full response distributions for 19,300 persona-question contexts,
making distributional evaluation possible. SALT improves
distributional alignment over all baselines, with the largest gains in
high-subjectivity regimes. Future work includes extending this
framework to sequential, multi-agent, and open-ended settings.

\section*{Ethics Statement}
\label{sec:ethics}
\textsc{SubjSim} involves human annotation. All 193 annotators
participated voluntarily; before starting the annotation task, they
were informed of the purpose of the study, the type of data collected,
and their right to withdraw at any time. Annotation compensation was
calculated so that the resulting hourly rate is guaranteed to exceed
the highest applicable local hourly wage standard. To protect privacy,
no directly identifying information (such as names or contact details)
was collected, all responses
were recorded under anonymized identifiers, and the released data contain only
demographic attribute values and annotated response distributions with
no link back to individuals. The survey questions themselves were
screened for cultural suitability for the annotator population during
dataset construction, and questions flagged as unsuitable were removed
(Appendix~\ref{sec:dataset}). The
representativeness limits of our annotator pool are discussed in
Appendix~\ref{subsec:annotator}.

\section*{AI Use Statement}
\label{sec:ai-use}
We used large language models only as writing and figure assistants in
the preparation of this paper: ChatGPT was used to check grammar and
correct typos in the manuscript, and Codex was used to assist in
producing the figures. No LLM was used to generate research ideas,
analyses, results, or claims. Separately, and as part of the
research methodology itself rather than paper preparation, DeepSeek-chat
was used to translate and pre-screen survey questions during the
construction of \textsc{SubjSim}, as documented in
Appendix~\ref{sec:dataset}.

\section*{Reproducibility Statement}
\label{sec:reproducibility}
We have taken several measures to make our results reproducible. All
theoretical results are stated with their assumptions in
\Cref{sec:analysis,sec:method}, and complete proofs are given in
\Cref{app:proof-all-opt,app:proof-kappa-unified,app:proof-sample-complexity,app:aggregation-lemmas,app:proof-bounds,app:proof-main}.
The full SALT procedure is specified in
\Cref{alg:context-agg}, with implementation details, the
theory-to-implementation mapping (\Cref{tab:theory-implementation}),
and all training hyperparameters (\Cref{tab:hyperparams}) in
\Cref{app:ca-details}. The construction of \textsc{SubjSim}, including source
surveys, screening and translation pipeline, the exact screening
prompt, annotation protocol, and annotator demographics, is
documented in \Cref{sec:dataset}. Evaluation is deterministic, and the backbone model (Qwen3-8B), context encoder
(Qwen3-embedding-8b), and training stack (LLaMA-Factory, DeepSpeed
ZeRO-2 on 8$\times$H20) are publicly available. We will release the
\textsc{SubjSim} dataset, the annotation platform specification, and
the training and evaluation code upon publication.

\bibliography{iclr2027_conference}
\bibliographystyle{iclr2027_conference}

\newpage
\appendix
\begin{center}
{\Large\bf Appendix}
\end{center}
\startcontents[appendix]
\printcontents[appendix]{}{1}{\setcounter{tocdepth}{2}}
\newpage

\section{Limitations}
\label{app:limitations}
\textsc{SubjSim} uses elicited response propensities rather than direct
repeated-choice frequencies. The annotator-level split tests unseen
personas within known questions rather than transfer to entirely new
decision contexts. SALT also depends on behavioral smoothness in the
representation space, so poor embeddings or discontinuous response
patterns can make aggregation harmful.

\section{Related Work and Positioning}
\label{app:related}

\paragraph{LLM agents for social simulation.}
Recent work uses LLM agents to simulate individuals, groups, and social
systems, ranging from interactive generative agents to opinion dynamics,
polarization, market behavior, and historical or policy simulations
\citep{park2023generative,gao2024agentscope,hua2023war,
sasahara2021social,bail2018exposure}. This line of work establishes that
LLMs can produce plausible social behavior and can be embedded in
multi-agent environments. Our focus is different: we ask what behavioral
target such agents should be trained and evaluated against. Rather than
treating simulation quality as matching a single observed response, we
formulate the target as recovering the full distribution over plausible
actions for a persona-context pair.

\paragraph{Single-action imitation and preference optimization.}
The dominant way to adapt LLM agents is supervised imitation of observed
actions, often evaluated by point-prediction accuracy. Preference
optimization methods such as DPO and PPO provide a stronger alignment
toolkit by learning from chosen/rejected comparisons
\citep{rafailov2023direct,schulman2017proximal}. However, when the data
provide only one realized action per context, both supervised and
preference objectives still receive hard, mode-like supervision. They
therefore learn which action was observed, or which action should be
preferred to alternatives, rather than the entire behavioral frequency
structure. This is the same hard-supervision gap that motivates our
misalignment analysis, though our formal result focuses on the
single-observation SFT objective.

\paragraph{Distributional evaluation and distribution alignment.}
A separate line of work changes the evaluation target: instead of
reporting only accuracy, it compares model outputs to response
distributions using KL, JSD, TVD, Wasserstein distance, or related
survey-response divergences. DSA is closest to our work in target space:
it also aims to match survey response distributions, but it does so
through distribution-shift alignment rather than context-level hard-label
aggregation~\citep{huang-etal-2026-distribution}. Our setting differs in the
supervision assumption. DSA and related distributional alignment methods
rely on distributional supervision that is available or can be constructed
at the population/question level. SALT targets the stricter
single-observation regime: it does not train on the persona-specific
probability-ball distribution, but constructs distributional supervision
by pooling hard observations from semantically similar contexts, following
the repeated-measurement intuition in the main analysis.

\paragraph{Human label variation.}
NLP research on annotation has similarly argued that disagreement among
annotators carries signal rather than noise, advocating soft labels,
annotator-aware models, and evaluation against label distributions instead
of a single gold label \citep{aroyo2015truth,plank2022human,uma2021learning}.
These works concern classification-style NLP annotations and model
variation at the label or annotator level. Our setting differs in both
object and ground truth: the estimation target is a persona-conditional
behavioral distribution for social simulation, and
\textsc{SubjSim} measures it directly through probability-ball
allocation rather than approximating it from a handful of discrete
annotations.

\paragraph{Context aggregation and nonparametric estimation.}
SALT is related to nonparametric smoothing and kernel regression, where
local neighborhoods are used to estimate a conditional response function
\citep{nadaraya1964estimating,watson1964smooth,silverman2018density}. The
key difference is the object being smoothed and the constraint imposed by
social-simulation data. We aggregate over contexts that share an action
space and are close in persona-context embedding space, producing a soft
label over discrete actions from otherwise hard observations. The adaptive
radius is not a generic hyperparameter search: it is derived from the
subjectivity-controlled bias--variance tradeoff, so more diffuse
behavioral distributions receive broader aggregation while nearly
deterministic contexts remain local.

\section{Theoretical Proofs}
\subsection{Proof of \texorpdfstring{\Cref{prop:all-opt}}{Proposition} (Single-Observation SFT
  Concentrates on Empirical One-Hot Labels)}
\label{app:proof-all-opt}

\begin{proof}
  For each training context $x^i$, the empirical SFT contribution is
  $-\log \phiModel(j^i \mid x^i)$ at the action level. Over the
  action-distribution simplex, this term is minimized by maximizing
  $\phiModel(j^i \mid x^i)$, whose largest possible value is~$1$.
  Because the statement is made in the closure of the model-induced
  action distributions, this boundary point is included and can be
  selected independently for every training context, yielding
  $\phiModel(k \mid x^i)=\mathbf{1}[k=j^i]$ for all~$i$.
  Since $-\log p$ is uniquely minimized at $p=1$, every
  empirical-risk minimizer must saturate the per-context term at every
  training context, so any minimizer $\tilde{\theta}$ in this closure
  satisfies $\phi_{\tilde{\theta}}(k \mid x^i)=\mathbf{1}[k=j^i]$ for
  all~$i$.
  For finite softmax parameterizations, the same boundary point may be
  approached only in the limit: any sequence with empirical risk
  approaching the infimum must have
  $\phiModel(j^i\mid x^i)\to 1$ and hence
  $\phiModel(k\mid x^i)\to 0$ for every $k\neq j^i$.
  This is an empirical-risk statement for finite single-observation
  data; with repeated observations from the same context, the population
  cross-entropy optimum would instead match the conditional label
  distribution.
\end{proof}

\subsection{Proof of the Fano Step in \texorpdfstring{\Cref{prop:kappa-unified}}{the Unified Proposition}}
\label{app:proof-kappa-unified}

\begin{proof}
  Write $e := 1 - p_{\max}$ and
  $H := H(\phiTrue(\cdot \mid x)) = -\kappa$. Fano's inequality for
  the error of the optimal single guess (the mode) states
  $H \leq h(e) + e \log(K-1)$, where $h(\cdot)$ is the binary entropy
  function. Since $h(e) \leq \log 2$, we obtain
  $H \leq \log 2 + e \log(K-1)$, and rearranging gives
  \begin{equation}
    1 - p_{\max} = e \;\geq\; \frac{H - \log 2}{\log(K-1)}
    = \frac{-\kappa - \log 2}{\log(K-1)}
    = \frac{s \log K - \log 2}{\log(K-1)}
  \end{equation}
  for $K \geq 3$, where the last step uses
  $s = -\kappa/\log K$ from \Cref{eq:subjectivity-score}. Combining
  with $\dtv(\phi_\theta, \phiTrue) \geq 1 - p_{\max}$ from
  \Cref{eq:tv-error} yields \Cref{eq:kappa-structural}. The bound is
  vacuous when $s \log K < \log 2$, i.e., for weakly subjective
  contexts, which is consistent with the analysis: the failures of the
  accuracy-based pipeline concentrate in the high-subjectivity regime.
  For $K = 2$, Fano's inequality reduces to $H \leq h(e)$, giving
  $e \geq h^{-1}(H)$ instead; the conclusion that the error grows with
  subjectivity is unchanged.
\end{proof}

\subsection{Proof of \texorpdfstring{\Cref{prop:sample-kappa}}{Proposition} (Local Lower Bound under
  Near-Uniform Behavior)}
\label{app:proof-sample-complexity}

\begin{proposition}[Local lower bound under near-uniform behavior]
  \label{prop:sample-kappa}
  Fix a $K$-action context and consider true distributions in a
  near-uniform neighborhood of radius $O(1/K)$, so that
  $e^{-2\kappa} = \Theta(K^2)$. There exist two alternatives $P,Q$ in
  this neighborhood and two fixed point-mass models whose $\dtv$-error
  ranking is reversed under $P$ and $Q$, while
  $\dkl(P \| Q)=O(1/K^2)$. Consequently, for any fixed
  $\eta>0$, any procedure that identifies the correct ranking from
  $n$ i.i.d.\ observations with error probability at most
  $\frac{1}{2}-\eta$ in this local problem requires
  $n=\Omega(K^2)=\Omega(e^{-2\kappa})$ samples.
\end{proposition}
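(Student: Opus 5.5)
The plan is a standard two-point (Le Cam) argument: construct two nearby near-uniform distributions under which two fixed point-mass models swap their $\dtv$ ranking, show the two alternatives are close in KL, and convert closeness into a sample lower bound. For point-mass models $\bm{\delta}_{a}$ and $\bm{\delta}_{b}$, \Cref{eq:tv-error} gives $\dtv(\bm{\delta}_a,P)=1-P_a$. Hence under any true distribution, the model $\bm{\delta}_a$ is better exactly when $P_a>P_b$, and the ranking question reduces to deciding the sign of $P_a-P_b$. Set $u=1/K$ and fix $c\in(0,1/2)$. Take
\[
P=(u(1+c),\,u(1-c),\,u,\ldots,u), \qquad Q=(u(1-c),\,u(1+c),\,u,\ldots,u),
\]
and compare $\bm{\delta}_{1}$ with $\bm{\delta}_{2}$. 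Under $P$ the first is strictly better, and under $Q$ the second is, so the ranking is reversed. Both distributions lie within $\ell_\infty$ distance $cu=O(1/K)$ of uniform, which is the stated neighborhood.

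Next I verify that $e^{-2\kappa}=\Theta(K^2)$ throughout the neighborhood. Since $e^{-\kappa}=e^{H}$, it suffices to show $H\ge \log K - O(1)$ for every distribution whose coordinates are all $\Theta(1/K)$. I would use $H\ge -\log\sum_k p_k^2$ (Rényi-2 entropy is at most Shannon entropy). When every $p_k\le C/K$, we get $\sum_k p_k^2\le C/K$, so $e^{H}\ge K/C$. Combined with $H\le\log K$, this gives $e^{H}=\Theta(K)$ and therefore $e^{-2\kappa}=\Theta(K^2)$.

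For the KL estimate I would use $\dkl(P\|Q)\le\chi^2(P\|Q)=\sum_k (P_k-Q_k)^2/Q_k$. Only two coordinates differ, each by $2cu$, and each $Q_k\ge u/2$. This gives $\chi^2\le 2\cdot 4c^2u^2/(u/2)=16c^2u=O(1/K)$. This is the step I expect to be the main obstacle. With this construction the per-coordinate perturbation is $\Theta(1/K)$ but only two coordinates move, so the divergence comes out as $O(1/K)$ rather than the claimed $O(1/K^2)$. To reach $1/K^2$ I would shrink the perturbation to $\epsilon=\Theta(K^{-3/2})$, still inside the radius-$O(1/K)$ neighborhood. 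The $\chi^2$ bound then becomes $O(\epsilon^2 K)=O(1/K^2)$, and the ranking reversal survives because it only needs $P_1-P_2$ and $Q_1-Q_2$ to have opposite signs, however small the gap. I would state the construction with this choice of $\epsilon$.

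Finally, tensorization gives $\dkl(P^{\otimes n}\|Q^{\otimes n})=n\,\dkl(P\|Q)=O(n/K^2)$, and Pinsker's inequality gives $\dtv(P^{\otimes n},Q^{\otimes n})\le\sqrt{n\,\dkl(P\|Q)/2}$. By Le Cam's two-point lemma, any test that outputs a ranking has error probability at least $\tfrac12\bigl(1-\dtv(P^{\otimes n},Q^{\otimes n})\bigr)$ under one of the two hypotheses. An error at most $\tfrac12-\eta$ therefore forces $\dtv(P^{\otimes n},Q^{\otimes n})\ge 2\eta$, hence $n\ge 8\eta^2/\dkl(P\|Q)=\Omega(K^2)$. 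Substituting $K^2=\Theta(e^{-2\kappa})$ from the second step gives $n=\Omega(e^{-2\kappa})$.
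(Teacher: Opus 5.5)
Your proposal is correct and, with the perturbation shrunk to $\varepsilon=\Theta(K^{-3/2})$, it is exactly the paper's proof: the same two-point pair swapping $1/K\pm\varepsilon$ on two coordinates, the same point masses $\bm{\delta}_1,\bm{\delta}_2$ with error $1-R_j$, and the same tensorization, Pinsker and Le Cam chain giving $n=\Omega(K^2)$. The only differences are cosmetic. You bound the KL divergence by $\chi^2$, whereas the paper computes $2\varepsilon\log\frac{1+t}{1-t}$ directly. You also check $e^{-2\kappa}=\Theta(K^2)$ via the R\'enyi-2 bound uniformly over the neighborhood, whereas the paper expands $H(P)$ at the two specific points.
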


\begin{proof}
  We give an explicit two-point construction.  Let $u$ be the uniform
  distribution on $K$ actions and choose
  $\varepsilon = cK^{-3/2}$ for a sufficiently small constant $c>0$.
  Define
  \begin{align}
    P_1 &= \frac{1}{K}+\varepsilon, &
    P_2 &= \frac{1}{K}-\varepsilon, &
    P_k &= \frac{1}{K}\quad (k\geq 3), \\
    Q_1 &= \frac{1}{K}-\varepsilon, &
    Q_2 &= \frac{1}{K}+\varepsilon, &
    Q_k &= \frac{1}{K}\quad (k\geq 3).
  \end{align}
  For large enough $K$, both distributions are valid and lie in an
  $O(1/K)$ neighborhood of $u$. Their entropy satisfies
  $H(P)=H(Q)=\log K-O(K\varepsilon^2)=\log K-O(1/K^2)$, so
  $\kappa=-\log K+O(1/K^2)$ and therefore
  $e^{-2\kappa}=\Theta(K^2)$.

  Consider two point-mass models, $\theta_1$ assigning all mass to
  action~$1$ and $\theta_2$ assigning all mass to action~$2$.  For any
  distribution $R$, the total variation error of the point mass at
  action~$j$ is
  \begin{equation}
    \dtv(\delta_j,R)=1-R_j.
  \end{equation}
  Hence, under $P$,
  $\dtv(\delta_1,P)=1-P_1 < 1-P_2=\dtv(\delta_2,P)$, so
  $\theta_1$ is better than $\theta_2$. Under $Q$ the inequality is
  reversed. Any procedure that identifies the correct ranking in this
  local problem therefore distinguishes whether the samples came from
  $P$ or $Q$.

  It remains to bound the statistical distance between the two
  hypotheses. Let $a=1/K$ and $t=\varepsilon/a=c/\sqrt{K}$. The
  per-sample KL divergence is
  \begin{align}
    \dkl(P\|Q)
    &= (a+\varepsilon)\log\frac{a+\varepsilon}{a-\varepsilon}
       +(a-\varepsilon)\log\frac{a-\varepsilon}{a+\varepsilon} \\
    &= 2\varepsilon \log\frac{1+t}{1-t}.
  \end{align}
  Since $t<1/2$ for large enough $K$,
  $\log((1+t)/(1-t)) \leq C t$ for a universal constant $C$, and thus
  \begin{equation}
    \dkl(P\|Q) \leq 2C\frac{\varepsilon^2}{a}
    = O(K\varepsilon^2)=O(1/K^2).
  \end{equation}

  Let $P^n$ and $Q^n$ denote the $n$-sample product distributions. By
  tensorization and Pinsker's inequality,
  \begin{equation}
    \dtv(P^n,Q^n)
    \leq \sqrt{\frac{\dkl(P^n\|Q^n)}{2}}
    = \sqrt{\frac{n\,\dkl(P\|Q)}{2}}
    \leq C'\sqrt{\frac{n}{K^2}} .
  \end{equation}
  Le~Cam's lemma gives minimax error at least
  $\frac{1}{2}(1-\dtv(P^n,Q^n))$. Therefore, if a ranking procedure
  achieves error probability at most $\frac{1}{2}-\eta$ for fixed
  $\eta>0$, then $\dtv(P^n,Q^n)\geq 2\eta$, which requires
  $n\geq c_\eta K^2$. Since $e^{-2\kappa}=\Theta(K^2)$ in this
  construction, the required sample size is
  $\Omega(e^{-2\kappa})$.
\end{proof}

\subsection{Supporting Lemmas for Context Aggregation}
\label{app:aggregation-lemmas}

We first record the properties of the effective action count that
motivate \Cref{def:keff}.

\begin{remark}[Properties of the effective action count]
  \label{remark:keff-properties}
  \Cref{def:keff} attaches the effective action count to a context
  through $\phiTrue(\cdot \mid x)$. The statements below apply it to
  other distributions as well, so throughout the appendix we write
  \begin{equation}
    \label{eq:keff-functional}
    \Keff(\phi) \;\coloneqq\;
    \Bigl(\sum_{k=1}^{K} \sqrt{\phi(k)}\Bigr)^{2},
    \qquad \phi \in \Delta(\cA),
  \end{equation}
  for the same expression evaluated at an arbitrary distribution, so
  that $\Keff^{*}(x) = \Keff(\phiTrue(\cdot \mid x))$ and the
  model-based estimate of \Cref{eq:adaptive-radius-practical} is
  $\hat{K}_{\mathrm{eff}}(x) = \Keff(\phiModel(\cdot \mid x))$.

  The range $1 \leq \Keff(\phi) \leq K$, with the two endpoints
  attained at a point mass and at the uniform distribution, is verified
  inside the proof in \Cref{app:proof-bounds}. Beyond the range,
  $\Keff$ is exactly the R\'enyi perplexity of order $1/2$, that is
  $\Keff(\phi) = e^{H_{1/2}(\phi)}$ where $H_{1/2}$ is the R\'enyi
  entropy of that order. Since R\'enyi entropies are non-increasing in
  their order, $H_{1/2}(\phi) \geq H(\phi)$ for the Shannon entropy
  $H$, and therefore
  \begin{equation}
    \label{eq:keff-lower-bound}
    \Keff(\phiTrue(\cdot \mid x)) \;\geq\;
    e^{H(\phiTrue(\cdot \mid x))} \;=\; K^{s(x)},
  \end{equation}
  with $s(x)$ the normalized subjectivity of
  \Cref{eq:subjectivity-score}. The effective action count is thus
  lower-bounded by $K$ raised to the normalized subjectivity, so the
  two grow together as behavior becomes more diffuse. This is why
  $\Keff$, rather than any other dispersion summary, is a natural
  effective-action measure for the statistical term of
  \Cref{lem:bounds}: the quantity that controls the statistical error
  is governed by the same notion of subjectivity that drives the rest
  of the analysis.

  Finally, $\Keff$ is continuous on the simplex: for any
  $\phi, \psi \in \Delta(\cA)$, using
  $|\sqrt{a} - \sqrt{b}| \leq \sqrt{|a - b|}$ termwise and then
  Cauchy--Schwarz,
  \begin{equation}
    \label{eq:keff-continuity}
    \left|\sqrt{\Keff(\phi)} - \sqrt{\Keff(\psi)}\right|
    \;\leq\; \sum_{k=1}^{K} \sqrt{|\phi(k) - \psi(k)|}
    \;\leq\; \sqrt{K \sum_{k=1}^{K} |\phi(k) - \psi(k)|}
    \;=\; \sqrt{2K \cdot \dtv(\phi, \psi)}.
  \end{equation}
  Two distributions that are close in total variation therefore have
  comparable effective action counts.
\end{remark}

The analysis rests on the following smoothness assumption.

\begin{assumption}[$L$-Lipschitz behavioral distribution]
  \label{assump:lipschitz}
  There exists $L > 0$ such that for all $x, x' \in \cX$:
  $\dtv(\phiTrue(\cdot \mid x),\, \phiTrue(\cdot \mid x'))
  \leq L \cdot d(x, x')$.
\end{assumption}

\noindent
This assumption should be read as behavioral smoothness in the chosen
representation space. It requires that, within a fixed action-space
group, nearby persona-context embeddings induce similar response
distributions; the ablations over fixed neighborhoods, global
frequency labels, and radius scale test whether this approximation is
useful in \textsc{SubjSim}.

\begin{lemma}[Three-term decomposition]
  \label{lem:decomp}
  For any context $x$, let
  $\bar{\phi}(\cdot \mid x) \coloneqq
  \frac{1}{|\cN(x)|}\sum_{x' \in \cN(x)} \phiTrue(\cdot \mid x')$
  denote the population mean of $\phiTrue$ within the neighborhood
  of~$x$. Then:
  \begin{align}
    \label{eq:decomp}
    &\dtv\!\left(\phiModel(\cdot \mid x),\;
      \phiTrue(\cdot \mid x)\right) \notag\\
    &\quad\leq\;
    \underbrace{\dtv(\phiModel(\cdot \mid x),\;
      \hat{\phi}(\cdot \mid x))
    }_{\varepsilon_{\mathrm{opt}}}
    +\;
    \underbrace{\dtv(\hat{\phi}(\cdot \mid x),\;
      \bar{\phi}(\cdot \mid x))
    }_{\varepsilon_{\mathrm{stat}}}
    +\;
    \underbrace{\dtv(\bar{\phi}(\cdot \mid x),\;
      \phiTrue(\cdot \mid x))
    }_{\varepsilon_{\mathrm{bias}}}.
  \end{align}
\end{lemma}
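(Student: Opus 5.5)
The decomposition in \Cref{lem:decomp} comes from applying the triangle inequality twice. The only work is to check that $\dtv$ is a genuine metric on the common simplex $\Delta(\cA_g)$ and that all four distributions live there.

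First, fix $x \in \cG_g$. Every neighbor $x' \in \cN(x)$ lies in the same action-space group by \Cref{eq:neighborhood}, so $\cA(x') = \cA_g = \cA(x)$. Hence each $\phiTrue(\cdot\mid x')$ is a point of $\Delta(\cA_g)$. The simplex is convex, so the uniform average $\bar{\phi}(\cdot\mid x)$ is also a point of $\Delta(\cA_g)$. The soft label $\hat{\phi}(\cdot\mid x)$ of \Cref{eq:soft-label} has nonnegative entries summing to one, because each neighbor contributes exactly one indicator. It is therefore well defined and in $\Delta(\cA_g)$, as long as $\cN(x)\neq\emptyset$. This holds because $x \in \cN(x)$, since $d(x,x)=0\le r(x)$. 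Finally, $\phiModel(\cdot\mid x)$ and $\phiTrue(\cdot\mid x)$ are in $\Delta(\cA_g)$ by construction.

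Second, $\dtv(P,Q)=\tfrac12\|P-Q\|_1$ is half an $\ell_1$ norm of a difference on $\mathbb{R}^K$. It therefore satisfies the triangle inequality: $\dtv(P,R)\le \dtv(P,Q)+\dtv(Q,R)$ for any three points of the simplex. We apply this with the chain
\[
\phiModel(\cdot\mid x)\;\to\;\hat{\phi}(\cdot\mid x)\;\to\;\bar{\phi}(\cdot\mid x)\;\to\;\phiTrue(\cdot\mid x).
\]
Inserting $\hat{\phi}$ between $\phiModel$ and $\phiTrue$ gives the first term plus $\dtv(\hat{\phi},\phiTrue)$. Inserting $\bar{\phi}$ into that remainder splits it into $\varepsilon_{\mathrm{stat}}$ and $\varepsilon_{\mathrm{bias}}$, which yields \Cref{eq:decomp}.

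No step is a real obstacle. The only points needing care are the shared-support bookkeeping in the first step, which the action-space partition guarantees, and the non-emptiness of $\cN(x)$. I would also note that the inequality is deterministic and pointwise in $x$, so it holds almost surely. Taking expectations over $x$ and over the sampling of the observed actions preserves it, which is the form used in \Cref{thm:main}.
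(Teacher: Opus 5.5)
Your proposal is correct and matches the paper, which gives no separate proof of \Cref{lem:decomp} and uses it in the proof of \Cref{thm:main} exactly as the twofold triangle inequality for $\dtv$ along the chain $\phiModel \to \hat{\phi} \to \bar{\phi} \to \phiTrue$. Your bookkeeping is a useful addition the paper leaves implicit: the common indexing of actions via the action-space partition, and $x \in \cN(x)$ so that $\hat{\phi}$ is well defined. Simplex membership itself is not strictly needed, since $\tfrac12\|\cdot\|_1$ satisfies the triangle inequality on all of $\mathbb{R}^K$.
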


\begin{lemma}[Statistical and bias bounds]
  \label{lem:bounds}
  Under \Cref{assump:lipschitz}, with neighborhood radius~$r$:
  \begin{equation}
    \label{eq:stat-bound}
    \E\!\left[\varepsilon_{\mathrm{stat}}\right]
    \;\leq\; \frac{1}{2}\sqrt{\frac{\Keff}{|\cN(x)|}},
    \qquad
    \varepsilon_{\mathrm{bias}} \;\leq\; L \cdot r,
  \end{equation}
  where $\Keff = \Keff(\bar{\phi}(\cdot \mid x))$ is the effective
  action count of the neighborhood-averaged population distribution
  $\bar{\phi}$ of \Cref{lem:decomp}. Under \Cref{assump:lipschitz},
  $\bar{\phi}(\cdot \mid x)$ lies within $L r$ in total variation of
  $\phiTrue(\cdot \mid x)$, so by \eqref{eq:keff-continuity} it differs
  from $\Keff^{*}(x)$ of \Cref{def:keff} only through the same
  smoothness that already controls the bias term; the main statements
  are written with $\Keff^{*}(x)$ for readability.
  As a worst case one may substitute $\Keff = K$.
\end{lemma}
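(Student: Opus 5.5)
The statement to prove is \Cref{lem:bounds}; it has two parts, and I would handle them separately.

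\textbf{Bias term.} This part is deterministic. Total variation is convex in each argument, since it is half an $\ell_1$ norm. Apply this to $\bar{\phi}(\cdot\mid x)$, which is an average over the neighborhood:
\[
\dtv(\bar{\phi}(\cdot\mid x),\phiTrue(\cdot\mid x))\le \frac{1}{|\cN(x)|}\sum_{x'\in\cN(x)}\dtv(\phiTrue(\cdot\mid x'),\phiTrue(\cdot\mid x)).
\]
Each summand is at most $L\,d(x',x)\le Lr$ by \Cref{assump:lipschitz} and the definition \eqref{eq:neighborhood} of $\cN(x)$. This gives $\varepsilon_{\mathrm{bias}}\le Lr$.

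\textbf{Statistical term.} I condition on the neighborhood and treat the observed actions $a(x')$, $x'\in\cN(x)$, as independent draws from $\phiTrue(\cdot\mid x')$. Write $N=|\cN(x)|$ and $\hat{\phi}(k)=\frac1N\sum_{x'}\mathbf{1}[a(x')=a^{(k)}]$. Then $\E\hat{\phi}(k)=\bar{\phi}(k)$, and by independence
\[
\mathrm{Var}(\hat{\phi}(k))=\frac{1}{N^2}\sum_{x'}\phiTrue(k\mid x')(1-\phiTrue(k\mid x'))\le \frac{\bar{\phi}(k)}{N}.
\]
Write $\E\,\dtv(\hat{\phi},\bar{\phi})=\frac12\sum_k \E|\hat{\phi}(k)-\bar{\phi}(k)|$. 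Jensen gives $\E|\hat{\phi}(k)-\bar{\phi}(k)|\le\sqrt{\mathrm{Var}(\hat{\phi}(k))}\le\sqrt{\bar{\phi}(k)/N}$. Summing over $k$ yields
\[
\E[\varepsilon_{\mathrm{stat}}]\le\frac{1}{2\sqrt N}\sum_k\sqrt{\bar{\phi}(k)}=\frac12\sqrt{\Keff(\bar{\phi})/N},
\]
which is exactly \eqref{eq:stat-bound} with $\Keff$ as in \eqref{eq:keff-functional}. The worst case $\Keff\le K$ follows from Cauchy--Schwarz: $(\sum_k\sqrt{\bar\phi(k)})^2\le K\sum_k\bar\phi(k)=K$.

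\textbf{Relating $\Keff(\bar{\phi})$ to $\Keff^*(x)$.} The bias step gives $\dtv(\bar\phi,\phiTrue)\le Lr$. The continuity bound \eqref{eq:keff-continuity} then gives $|\sqrt{\Keff(\bar\phi)}-\sqrt{\Keff^*(x)}|\le\sqrt{2KLr}$, which justifies writing $\Keff^*(x)$ in the main text up to this smoothness correction.

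\textbf{Main obstacle.} The delicate point is the modelling step in the statistical bound. The samples are not identically distributed, and I must use the inequality $\mathrm{Var}\le\bar\phi(k)/N$ for heterogeneous Bernoullis, which does hold since each summand satisfies $p(1-p)\le p$. Independence of the observations across distinct contexts also has to be assumed; it is implicit in the data model of \Cref{subsec:practice}.
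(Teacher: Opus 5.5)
Your proposal is correct and follows the paper's proof essentially step for step. For the bias term it uses convexity of $\dtv$ together with \Cref{assump:lipschitz}. For the statistical term it uses the heterogeneous-Bernoulli variance bound $\mathrm{Var}(\hat{\phi}_k)\le\bar{\phi}_k/|\cN(x)|$, then Jensen, then Cauchy--Schwarz, which also gives the worst case $\Keff\le K$. Your explicit bound $|\sqrt{\Keff(\bar{\phi})}-\sqrt{\Keff^{*}(x)}|\le\sqrt{2KLr}$ via \eqref{eq:keff-continuity}, and your flagging of the independence assumption, make precise what the paper states only in words.
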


\begin{corollary}[Aggregation eliminates the structural error of
  standard training]
  \label{cor:improvement}
  In the single-observation regime, standard SFT fits one hard label per
  context and therefore incurs the point-mass error
  $\dtv(\phi_\theta^{\mathrm{SFT}}, \phiTrue) \geq 1 - p_{\max}$ at that
  context.
  Context-aggregation achieves error
  $\varepsilon_{\mathrm{opt}}+
  (L^{d_\cX} \Keff^{*}(x) / n_g)^{1/(d_\cX+2)}$ up to constants; when
  optimization error is
  negligible, this upper bound tends to zero as $n_g \to \infty$.
\end{corollary}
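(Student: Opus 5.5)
The corollary has two parts, and I will treat them separately: a lower bound for SFT that is already proved, and an upper bound for aggregation that comes straight from \Cref{thm:main}.

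\textbf{SFT lower bound.} By \Cref{prop:all-opt}, under the full-capacity idealization any empirical-risk minimizer of \eqref{eq:sft-objective} is the point mass $\bm{\delta}_{a(x)}$ at each training context. By \eqref{eq:tv-error}, its total-variation error is exactly $1-p_j$, where $j$ is the index of the observed action. Since $p_j \le p_{\max}$, the error is at least $1-p_{\max}$, which is the first inequality of \Cref{prop:kappa-unified}. For finite softmax parameterizations I will invoke the limiting statement at the end of the proof of \Cref{prop:all-opt}. Along any risk-minimizing sequence $\phiModel(\cdot\mid x)\to\bm{\delta}_{a(x)}$, so by continuity of $\dtv$ the error converges to $1-p_j \ge 1-p_{\max}$.

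\textbf{Aggregation upper bound.}
\begin{itemize}
\item Start from \Cref{lem:decomp}, which splits the error into three terms: $\varepsilon_{\mathrm{opt}}$, $\varepsilon_{\mathrm{stat}}$ and $\varepsilon_{\mathrm{bias}}$.
\item Take expectations and apply \Cref{lem:bounds}. This gives $\E[\varepsilon_{\mathrm{stat}}] \le \tfrac12\sqrt{\Keff/|\cN(x)|}$ and $\varepsilon_{\mathrm{bias}}\le Lr$.
\item Substitute $|\cN(x)|\asymp n_g r^{d_\cX}$ to recover \eqref{eq:total-bound}.
\item Choose $r=r^*$ from \eqref{eq:optimal-r}. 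Then $Lr^* = L^{d_\cX/(d_\cX+2)}(\Keff^*/n_g)^{1/(d_\cX+2)}$, and the statistical term has the same order. Their sum is $\asymp (L^{d_\cX}\Keff^*(x)/n_g)^{1/(d_\cX+2)}$, as claimed.
\item Replace $\Keff(\bar\phi)$ by $\Keff^*(x)$ using the continuity estimate \eqref{eq:keff-continuity} together with $\dtv(\bar\phi,\phiTrue)\le Lr$. This changes only constants once $Lr$ is small, which holds in the regime where $r^*\to 0$.
\end{itemize}

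\textbf{Limit $n_g\to\infty$.} With $\varepsilon_{\mathrm{opt}}$ negligible, I will use the worst-case bound $\Keff^*(x)\le K$ from \Cref{def:keff}. Since $K$, $L$ and $d_\cX$ are fixed, the bound $(L^{d_\cX}K/n_g)^{1/(d_\cX+2)}$ tends to zero.

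The main obstacle is legitimacy rather than algebra. The relation $|\cN(x)|\asymp n_g r^{d_\cX}$ must keep holding as $r^*\to0$. This requires $n_g r^{*d_\cX}\to\infty$, which I will check directly: $n_g r^{*d_\cX}\asymp n_g^{2/(d_\cX+2)}(\Keff^*/L^2)^{d_\cX/(d_\cX+2)}\to\infty$. The neighborhood therefore grows even though the radius shrinks, so the statistical bound remains valid. Combining this with the SFT lower bound gives the stated contrast: $1-p_{\max}$ does not depend on $n_g$, while the aggregation bound vanishes as $n_g\to\infty$.
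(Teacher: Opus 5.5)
Your proposal is correct and follows the paper's route. The SFT half is \Cref{prop:all-opt} combined with \eqref{eq:tv-error} and $p_j\le p_{\max}$; the aggregation half is \Cref{lem:decomp} plus \Cref{lem:bounds}, optimized at $r^*$ as in the proof of \Cref{thm:main}, with $\Keff^*(x)\le K$ then sending the bound to zero. Your additional check that $n_g (r^*)^{d_\cX}\to\infty$, so the neighborhood keeps growing while the radius shrinks, is a sanity condition the paper leaves implicit.
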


\begin{remark}[Estimating the effective action count from the model]
  \label{remark:plugin-keff}
  The theorem is stated for the oracle effective action count of the
  local population distribution. The practical rule in
  \Cref{eq:adaptive-radius-practical} replaces it with
  $\hat{K}_{\mathrm{eff}}(x)$ computed from the current model output. If
  $\hat{K}_{\mathrm{eff}}(x) \in
  [\Keff^{*}(x)/c,\; c \cdot \Keff^{*}(x)]$ for some constant
  $c \geq 1$, then, since $r(x)$ scales as $\Keff^{1/(d_\cX+2)}$, the
  resulting radius is within a factor $c^{1/(d_\cX+2)}$ of the oracle
  radius, and the optimized upper bound changes only by constants. The
  exponent makes this tolerance generous in practice: with
  $d_\cX = 28$, even $c = 2$ gives $2^{1/30} \approx 1.02$.
  Without this approximation, the model-based rule should be read
  as an oracle-motivated heuristic rather than a direct consequence of
  \Cref{thm:main}.

  The approximation is what the training objective is designed to
  deliver. Applying the continuity bound
  \eqref{eq:keff-continuity} with $\phi = \phiModel(\cdot \mid x)$ and
  $\psi = \bar{\phi}(\cdot \mid x)$, the gap between the model-based
  count and the oracle count is controlled by exactly the quantity the
  aggregation loss in \Cref{eq:agg-loss} minimizes. In the ideal case
  where the optimization error $\varepsilon_{\mathrm{opt}}$ and the
  statistical error of the soft label both vanish, $\phiModel(\cdot \mid
  x) \to \bar{\phi}(\cdot \mid x)$ and hence
  $\hat{K}_{\mathrm{eff}}(x) \to \Keff(\bar{\phi}(\cdot \mid x))$, so
  the practical radius returns to the oracle radius.
\end{remark}

\subsection{Proof of \texorpdfstring{\Cref{lem:bounds}}{Lemma} (Statistical and Bias Bounds)}
\label{app:proof-bounds}

\begin{proof}
  \textbf{Statistical bound.}
  Write $n_x=|\cN(x)|$ and, for $x'\in\cN(x)$,
  $p_{x',k}=\phiTrue(k\mid x')$.
  Each coordinate $\hat{\phi}_k$ is an average of independent Bernoulli
  variables with possibly different means $p_{x',k}$.
  Let
  $\bar{\phi}_k=n_x^{-1}\sum_{x'\in\cN(x)} p_{x',k}$.
  Then
  \begin{equation}
    \text{Var}(\hat{\phi}_k)
    = \frac{1}{n_x^2}\sum_{x'\in\cN(x)}
      p_{x',k}(1-p_{x',k})
    \leq \frac{\bar{\phi}_k}{n_x}.
  \end{equation}
  By linearity of expectation and Jensen's inequality
  ($\E[|X|] \leq \sqrt{\E[X^2]}$):
  \begin{align}
    \E\!\left[\varepsilon_{\mathrm{stat}}\right]
    &= \frac{1}{2}\sum_{k=1}^{K}
       \E\!\left[|\hat{\phi}_k - \bar{\phi}_k|\right]
    \;\leq\; \frac{1}{2}\sum_{k=1}^{K}
       \sqrt{\text{Var}(\hat{\phi}_k)} \\
    &\leq \frac{1}{2}\sum_{k=1}^{K}
       \sqrt{\frac{\bar{\phi}_k}{n_x}}
    \;=\; \frac{1}{2\sqrt{n_x}}\sum_{k=1}^{K}\sqrt{\bar{\phi}_k}.
  \end{align}
  By \eqref{eq:keff-functional},
  $\sum_k \sqrt{\bar{\phi}_k} = \sqrt{\Keff(\bar{\phi}(\cdot\mid x))}$, so
  $\E[\varepsilon_{\mathrm{stat}}] \leq
  \frac{1}{2}\sqrt{\Keff / n_x}$.
  Note that by Cauchy--Schwarz,
  $\sum_k \sqrt{\bar{\phi}_k} \leq
  \sqrt{K \sum_k \bar{\phi}_k} = \sqrt{K}$
  (since $\sum_k \bar{\phi}_k = 1$), so $\Keff \leq K$ always
  holds.
  When $\bar{\phi}(\cdot\mid x)$ is a point mass,
  $\sum_k \sqrt{\bar{\phi}_k} = 1$ so $\Keff = 1$; when
  $\bar{\phi}(\cdot\mid x)$ is uniform,
  $\sum_k \sqrt{1/K} = \sqrt{K}$ so $\Keff = K$.

  \textbf{Bias bound.}
  For the overlapping neighborhood used by SALT, every
  $x'\in\cN(x)$ satisfies $d(x',x)\leq r$.
  By convexity of $\dtv$ and \Cref{assump:lipschitz}:
  \begin{align}
    \varepsilon_{\mathrm{bias}}
    &= \dtv(\bar{\phi}(\cdot\mid x), \phiTrue(\cdot \mid x)) \\
    &\leq
      \frac{1}{|\cN(x)|}\sum_{x'\in\cN(x)}
      \dtv(\phiTrue(\cdot \mid x'), \phiTrue(\cdot \mid x)) \\
    &\leq
      \frac{1}{|\cN(x)|}\sum_{x'\in\cN(x)} L\cdot d(x',x)
      \;\leq\; Lr.
  \end{align}
\end{proof}

\subsection{Proof of \texorpdfstring{\Cref{thm:main}}{Theorem} (Aggregation--Estimation Tradeoff)}
\label{app:proof-main}

\begin{proof}
  Substituting $|\cN(x)| \asymp n_g r^{d_{\cX}}$ into \Cref{lem:bounds} and
  combining via \Cref{lem:decomp} gives
  \begin{equation}
    \E[\dtv] \;\lesssim\; \varepsilon_{\mathrm{opt}} + Lr +
    \sqrt{\frac{\Keff}{n_g r^{d_{\cX}}}}.
  \end{equation}
  The optimization term is independent of the neighborhood radius in this
  tradeoff. The bias term increases in $r$ and the statistical term
  decreases in $r$.
  To find the optimal $r$, we differentiate with respect to $r$ and
  set the result to zero:
  \begin{equation}
    L \;=\; \frac{d_{\cX}}{2} \cdot
    \frac{1}{r} \cdot \sqrt{\frac{\Keff}{n_g r^{d_{\cX}}}},
  \end{equation}
  which gives $L^2 r^{d_{\cX}+2} \asymp \Keff/n_g$, yielding
  \begin{equation}
    r^* \;\asymp\;
    \left(\frac{\Keff}{n_g L^2}\right)^{1/(d_{\cX}+2)}.
  \end{equation}
  Substituting $r^*$ back: the bias term is
  $Lr^* = L \cdot (\Keff/(n_g L^2))^{1/(d_{\cX}+2)}
  = (L^{d_{\cX}} \Keff / n_g)^{1/(d_{\cX}+2)}$,
  and the statistical term is of the same order, giving the optimized
  upper bound in~\eqref{eq:optimal-r}.
\end{proof}

\section{SALT Implementation Details}
\label{app:ca-details}

On \textsc{SubjSim}, SALT groups contexts by survey question, embeds
each context with Qwen3-embedding-8b, retrieves adaptive-radius
neighbors within the group, and trains on the resulting soft labels.
The remainder of this section specifies each of these steps.

\begin{algorithm}[!t]
  \caption{Context-Aggregation Training with Adaptive Radius}
  \label{alg:context-agg}
  \begin{algorithmic}[1]
    \REQUIRE Dataset $\cD$; pretrained context encoder; radius function
             $\rho$; refresh interval $T$; number of epochs $E$
    \ENSURE Trained model $\phiModel$
    \STATE \textbf{Action-space partitioning:} group all contexts by
           action space: $\cG_g = \{x : \cA(x) = \cA_g\}$
    \STATE Embed all contexts using the pretrained encoder
    \STATE Initialize $\phiModel$ from a pretrained LLM
    \FOR{epoch $= 1, \ldots, E$}
      \FOR{each optimizer step}
        \IF{the step index is a multiple of $T$}
          \FOR{each context $x$}
            \STATE Compute adaptive radius
                   $r(x) \leftarrow \rho(\phiModel, x)$
            \STATE Retrieve neighborhood
                   $\cN(x) \leftarrow \{x' \in \cG_g : d(x', x) \leq r(x)\}$
            \STATE Construct soft label
                   $\hat{\phi}(k \mid x) \leftarrow
                   |\cN(x)|^{-1}\sum_{x' \in \cN(x)}
                   \mathbf{1}[a(x') = a^{(k)}]$
          \ENDFOR
        \ENDIF
        \STATE Update $\theta$ on the current batch by minimizing
               $\dkl(\hat{\phi}(\cdot \mid x) \| \phiModel(\cdot \mid x))$
      \ENDFOR
    \ENDFOR
  \end{algorithmic}
\end{algorithm}

\begin{table*}[!t]
\centering
\caption{Mapping between theoretical concepts, notation, and their SubjSim implementation.}
\label{tab:theory-implementation}
\small
\setlength{\tabcolsep}{4pt}
\renewcommand{\arraystretch}{1.15}
\begin{tabularx}{\textwidth}{@{}
  >{\raggedright\arraybackslash}p{2.5cm}
  >{\centering\arraybackslash}p{2.0cm}
  >{\raggedright\arraybackslash}X
  >{\raggedright\arraybackslash}p{4.0cm}
  @{}}
\toprule
\textbf{Concept} & \textbf{Notation} & \textbf{SubjSim realization} & \textbf{Protocol role} \\
\midrule
Decision context & $x=(u,s)$ & Annotator persona paired with a survey question; each persona has 30 demographic attributes. & Input to training and evaluation. \\
Action space & $\cA(x)$ & Candidate response options for the survey question. & Common support for distributional evaluation within each question. \\
Latent response-propensity target & $\phiTrue(\cdot\mid x)$ & Probability-ball empirical distribution $\hat{\phi}^{\mathrm{ball}}_u(\cdot\mid x)$, used as an elicited distributional proxy rather than a repeated-choice frequency. & Hidden during training; used only as the evaluation target. \\
Hard observation & $a(x)$ & Modal response under $\hat{\phi}^{\mathrm{ball}}_u(\cdot\mid x)$. & Hard-label proxy for SFT, DPO, PPO, and SALT targets. \\
Action-space group & $\cG_g$ & Persona-question contexts from the same survey question. & Restricts aggregation to comparable candidate options. \\
SALT neighborhood & $\cN(x)$ & Nearest contexts within the adaptive embedding radius, using Qwen3-embedding-8b embeddings. & Defines which hard observations SALT pools. \\
Effective action count (oracle) & $\Keff^{*}(x)$ & Effective action count of the true response distribution at the context; never observed. & Appears in the bound of \Cref{thm:main} and in the oracle radius. \\
Effective action count (model-based) & $\hat{K}_{\mathrm{eff}}(x)$ & The same functional applied to the model's own distribution, $\Keff(\phiModel(\cdot\mid x))$. & Sets the adaptive radius without using the hidden target. \\
Global baseline & Global-Freq & Question-level frequency of hard labels. & Ablation that removes persona conditioning. \\
\bottomrule
\end{tabularx}
\end{table*}

\paragraph{Context embeddings.}
We use Qwen3-embedding-8b \citep{zhang2025qwen3embedding} as the context encoder, which produces
4096-dimensional vectors for Chinese text. Embeddings are precomputed
once before training and cached on disk; they are not updated during
training. All contexts within the same action-space group share the same
situational description~$s$. As a result, the embeddings primarily
capture persona similarity within each group rather than situational
variation.

\paragraph{Neighborhood Construction}
Pairwise distances between context embeddings are computed using the
Euclidean ($\ell_2$) distance. The full distance matrix is precomputed
once at training startup using SciPy on CPU and cached for reuse.
We estimate the intrinsic dimension $d_{\cX}$ of the persona embedding
space via PCA with a variance threshold of $0.90$, yielding
$d_{\cX} = 28$.
The adaptive radius $r(x)$ is then computed per context according to
Equation~\ref{eq:adaptive-radius-practical}, with scale factor
$C = 0.2$.

\paragraph{Soft-label refresh schedule.}
Soft labels are refreshed every 30 training steps using the current
model checkpoint. Per-epoch refresh adapts slowly in early training,
whereas per-step refresh is computationally prohibitive and unstable.
The 30-step interval balances label responsiveness with training
efficiency.

\paragraph{Training hyperparameters.}
Table~\ref{tab:hyperparams} reports the main training and
method-specific hyperparameters for each method. All methods are
trained on 8 GPUs; SALT uses per-device batch size 1 with 16
gradient-accumulation steps. Both the DPO and PPO policies are initialized from
the SFT checkpoint. SALT uses a maximum sequence length of 1024 for
evaluation (\texttt{ca\_eval\_max\_length}).

\begin{table}[t]
\centering
\caption{Main training and method-specific hyperparameters for all methods.}
\label{tab:hyperparams}
\small
\setlength{\tabcolsep}{3.5pt}
\begin{tabular}{@{}llcccccc@{}}
\toprule
& & \multicolumn{5}{c}{\textbf{Policy models}} & \textbf{PPO aux.} \\
\cmidrule(lr){3-7}\cmidrule(lr){8-8}
\textbf{Group} & \textbf{Parameter} & \textbf{SALT} & \textbf{SFT} & \textbf{DPO} & \textbf{PPO} & \textbf{DSA} & \textbf{RM} \\
\midrule
\multirow{7}{*}{\textbf{General}}
& Epochs & 4 & 5 & 4 & 4 & 4 & 4 \\
& Global batch size & 128 & 128 & 96 & 128 & 128 & 128 \\
& Learning rate & 5e-6 & 5e-6 & 5e-7 & 1e-6 & 5e-6 & 1e-5 \\
& LR schedule & Cosine & Cosine & Cosine & Cosine & Cosine & Cosine \\
& Warmup ratio & 0.1 & 0.1 & 0.1 & 0.1 & 0.1 & 0.1 \\
& Optimizer & AdamW & AdamW & AdamW & AdamW & AdamW & AdamW \\
& Max sequence length & 1024 & 1024 & 1024 & 1024 & 1024 & 1024 \\
\midrule
\multirow{4}{*}{\textbf{DPO}}
& $\beta$ & --- & --- & 0.07 & --- & --- & --- \\
& Loss type & --- & --- & Sigmoid & --- & --- & --- \\
& Label smoothing & --- & --- & 0.0 & --- & --- & --- \\
& FTX coefficient & --- & --- & 0.0 & --- & --- & --- \\
\midrule
\multirow{5}{*}{\textbf{PPO}}
& $\epsilon_{\text{clip}}$ & --- & --- & --- & 0.2 & --- & --- \\
& Target KL & --- & --- & --- & 6.0 & --- & --- \\
& Initial KL coeff. & --- & --- & --- & 0.05 & --- & --- \\
& Sampling temperature & --- & --- & --- & 0.7 & --- & --- \\
& Top-$p$ & --- & --- & --- & 0.9 & --- & --- \\
\midrule
\multirow{3}{*}{\textbf{SALT}}
& Scale factor $C$ & 0.2 & --- & --- & --- & --- & --- \\
& PCA variance threshold & 0.90 & --- & --- & --- & --- & --- \\
& Softmax temperature & 1.0 & --- & --- & --- & --- & --- \\
\bottomrule
\end{tabular}
\end{table}

\paragraph{Reward Model Construction (PPO).}
The reward model is initialized from the SFT checkpoint. For each
context, the action with the highest probability in the annotator's
empirical distribution, i.e., the hard label defined in
\Cref{sec:data}, is treated as the chosen response. Each remaining
candidate is paired individually as a rejected response, yielding $K-1$
preference pairs per context.

\paragraph{Computational Cost.}
SALT and SFT are trained on eight NVIDIA H20 GPUs using DeepSpeed ZeRO-2.
Context embeddings for all 16,500 training samples are precomputed in
40 seconds. Each training epoch takes approximately 35 minutes for SALT
and 31 minutes for SFT.

\FloatBarrier

\section{Per-Domain Results}
\label{app:per-domain}

\Cref{fig:subjectivity-levels,fig:ablation,fig:sensitivity-C} in the main
text aggregate over the whole test set. This section repeats the same three
analyses within each of the eight topic domains of \textsc{SubjSim}, so that
the subjectivity trend, the neighbor-selection ablation, and the radius-scale
sensitivity can each be checked domain by domain.

\begin{figure}[p]
\centering
\begin{subfigure}{\linewidth}
\centering
\includegraphics[width=\linewidth]{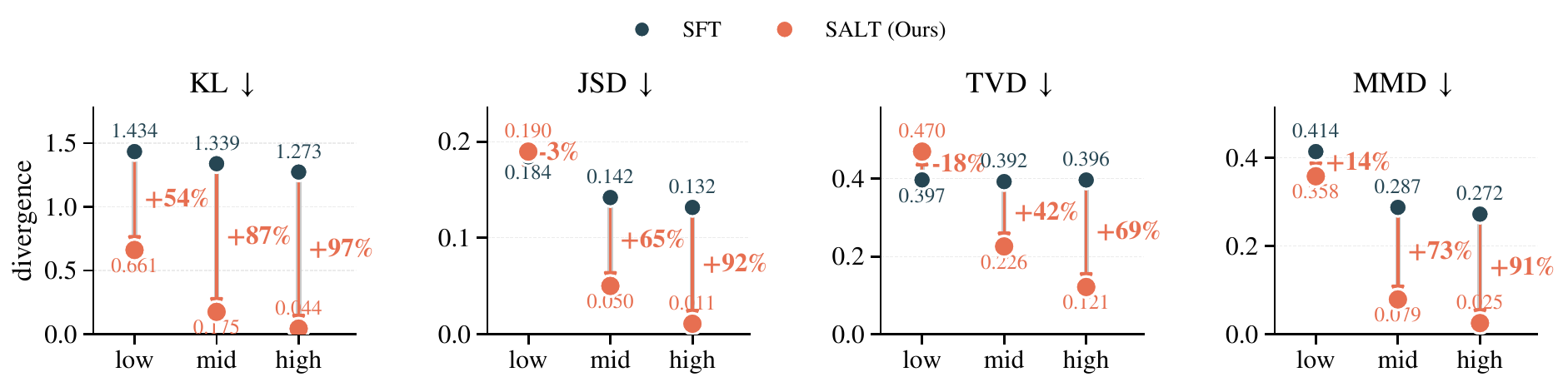}
\caption{Subjectivity-stratified comparison between SFT and SALT}
\end{subfigure}

\vspace{8pt}
\begin{subfigure}{\linewidth}
\centering
\includegraphics[width=\linewidth]{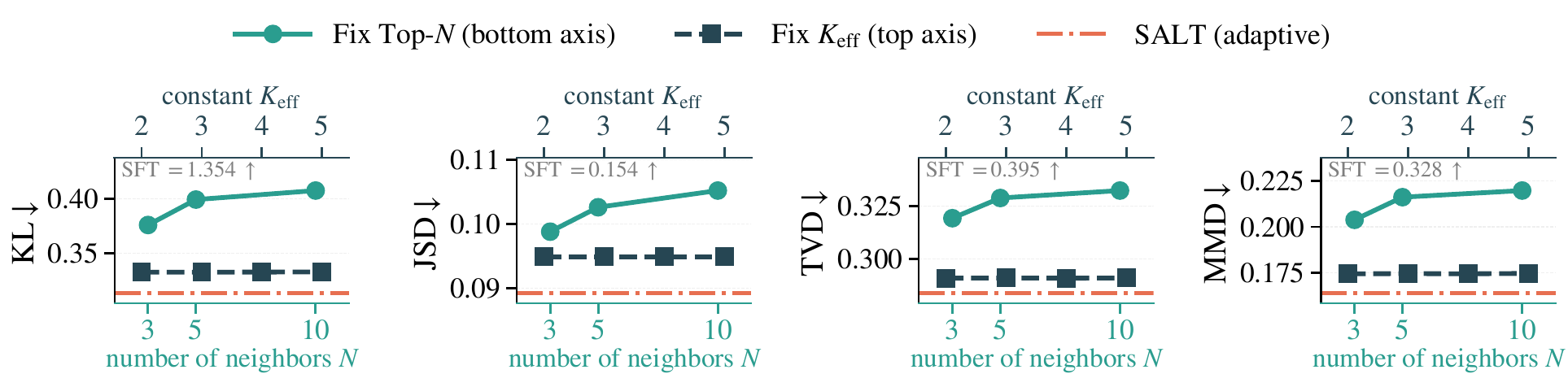}
\caption{Ablation over neighbor selection}
\end{subfigure}

\vspace{8pt}
\begin{subfigure}{\linewidth}
\centering
\includegraphics[width=\linewidth]{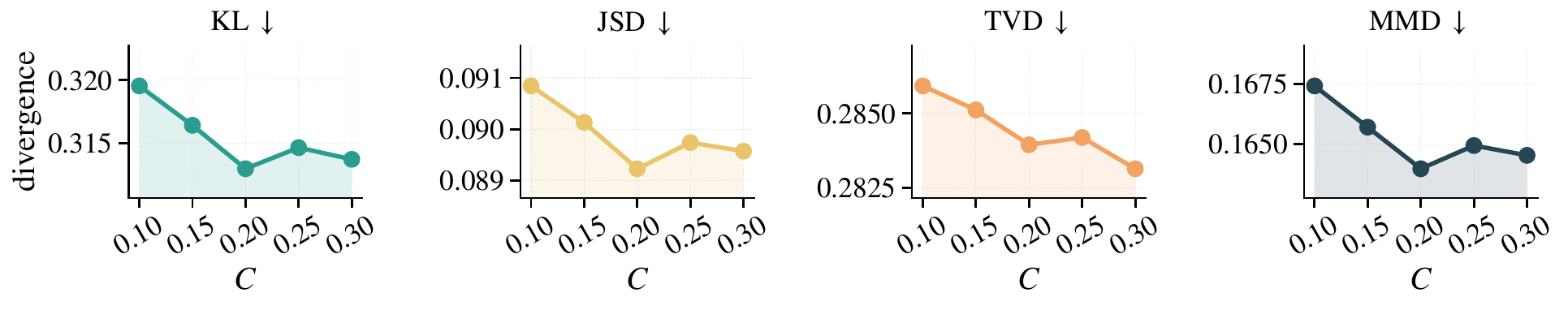}
\caption{Sensitivity to the radius scale $C$}
\end{subfigure}
\caption{Per-domain results on the \textbf{Economy} domain, mirroring
\Cref{fig:subjectivity-levels}, \Cref{fig:ablation}, and
\Cref{fig:sensitivity-C} of the main text. All metrics are divergences
(lower is better).}
\label{fig:domain-economy}
\end{figure}

\begin{figure}[p]
\centering
\begin{subfigure}{\linewidth}
\centering
\includegraphics[width=\linewidth]{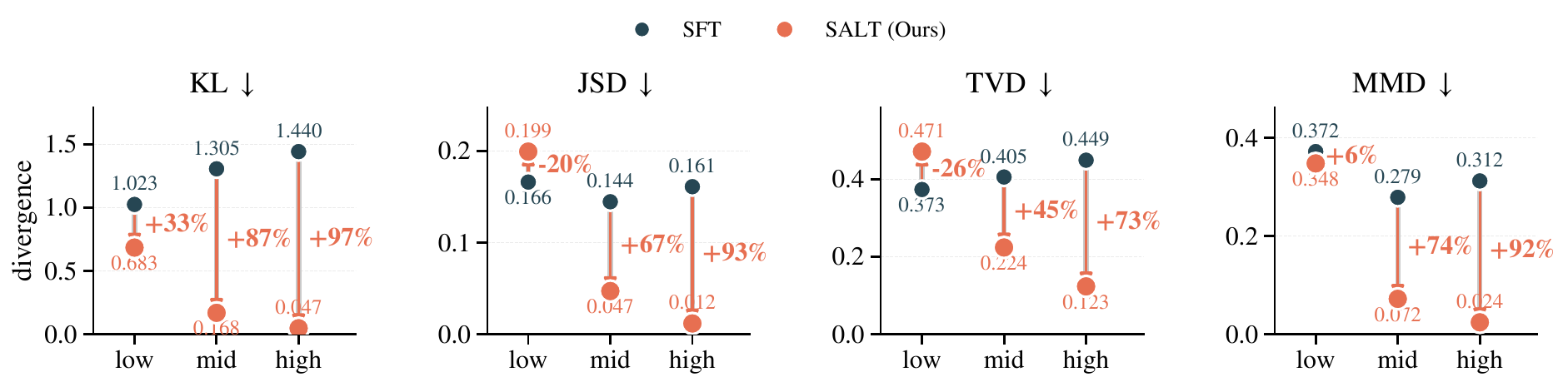}
\caption{Subjectivity-stratified comparison between SFT and SALT}
\end{subfigure}

\vspace{8pt}
\begin{subfigure}{\linewidth}
\centering
\includegraphics[width=\linewidth]{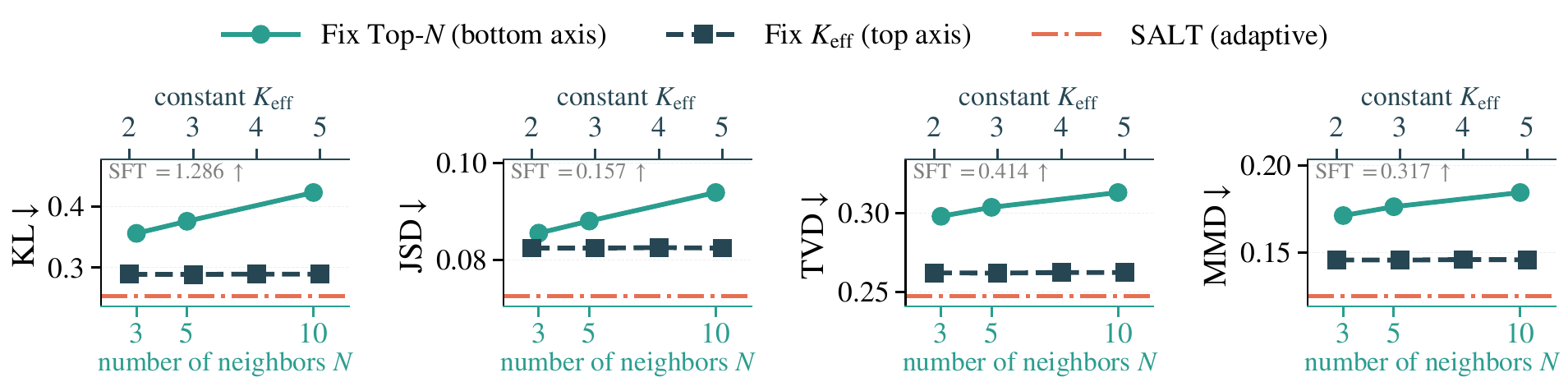}
\caption{Ablation over neighbor selection}
\end{subfigure}

\vspace{8pt}
\begin{subfigure}{\linewidth}
\centering
\includegraphics[width=\linewidth]{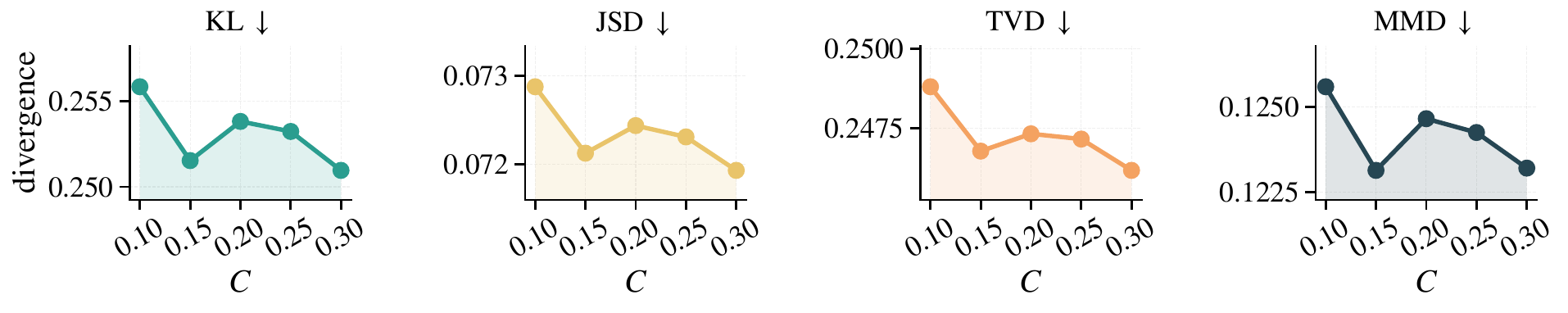}
\caption{Sensitivity to the radius scale $C$}
\end{subfigure}
\caption{Per-domain results on the \textbf{Political} domain, mirroring
\Cref{fig:subjectivity-levels}, \Cref{fig:ablation}, and
\Cref{fig:sensitivity-C} of the main text. All metrics are divergences
(lower is better).}
\label{fig:domain-political}
\end{figure}

\begin{figure}[p]
\centering
\begin{subfigure}{\linewidth}
\centering
\includegraphics[width=\linewidth]{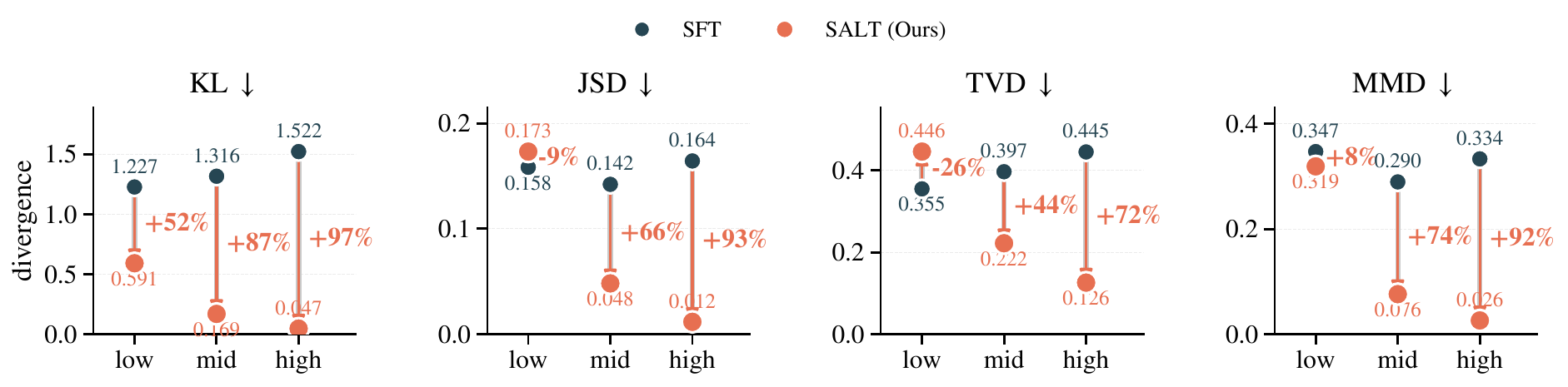}
\caption{Subjectivity-stratified comparison between SFT and SALT}
\end{subfigure}

\vspace{8pt}
\begin{subfigure}{\linewidth}
\centering
\includegraphics[width=\linewidth]{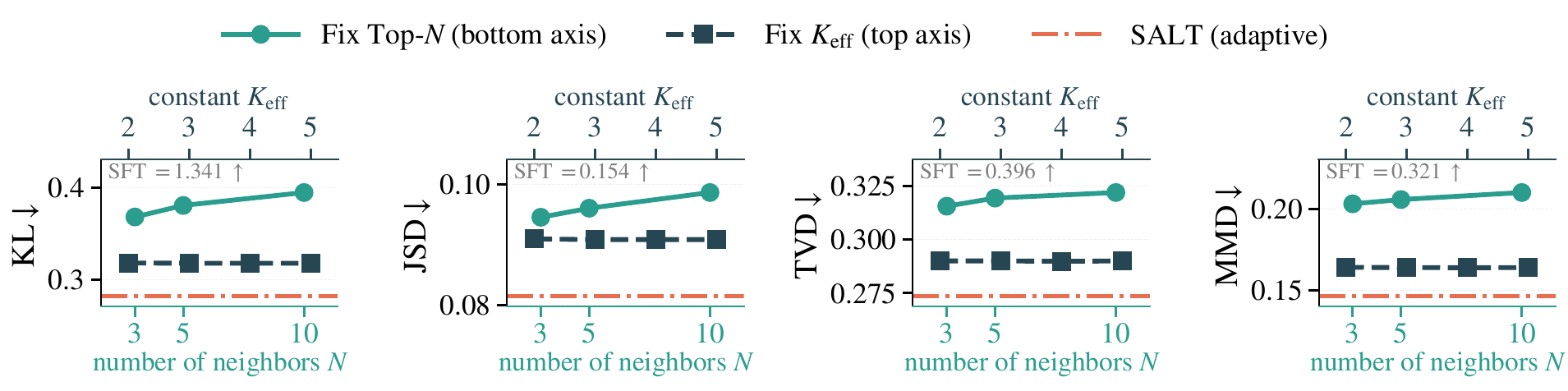}
\caption{Ablation over neighbor selection}
\end{subfigure}

\vspace{8pt}
\begin{subfigure}{\linewidth}
\centering
\includegraphics[width=\linewidth]{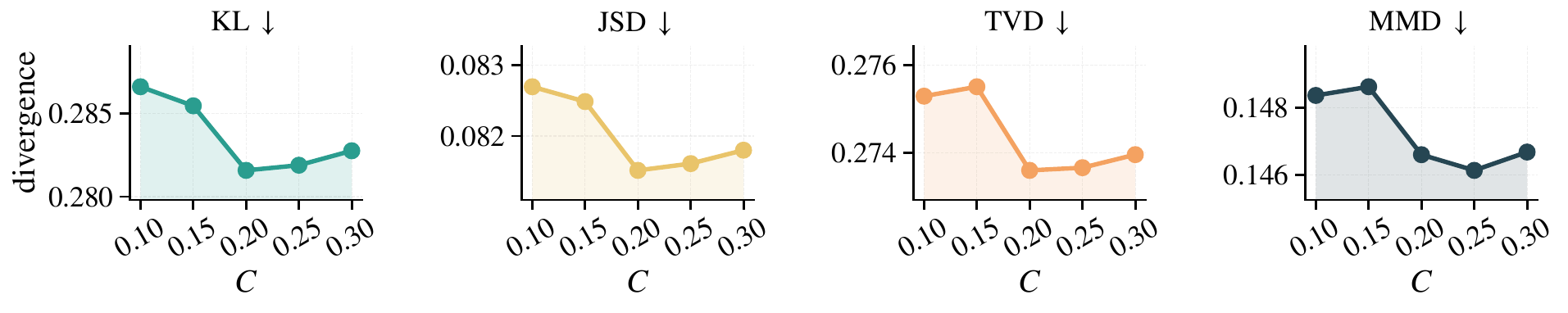}
\caption{Sensitivity to the radius scale $C$}
\end{subfigure}
\caption{Per-domain results on the \textbf{Technology} domain, mirroring
\Cref{fig:subjectivity-levels}, \Cref{fig:ablation}, and
\Cref{fig:sensitivity-C} of the main text. All metrics are divergences
(lower is better).}
\label{fig:domain-technology}
\end{figure}

\begin{figure}[p]
\centering
\begin{subfigure}{\linewidth}
\centering
\includegraphics[width=\linewidth]{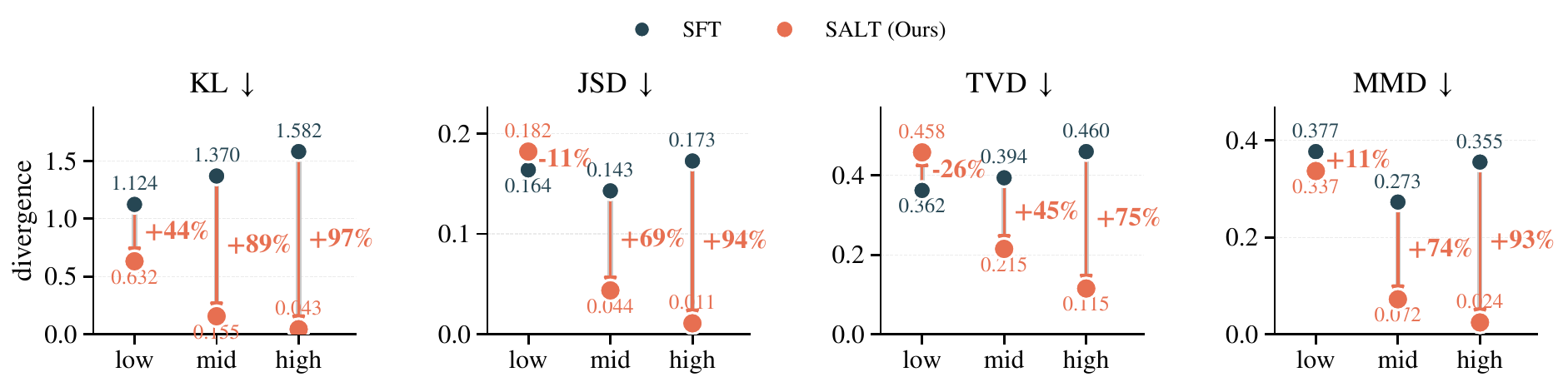}
\caption{Subjectivity-stratified comparison between SFT and SALT}
\end{subfigure}

\vspace{8pt}
\begin{subfigure}{\linewidth}
\centering
\includegraphics[width=\linewidth]{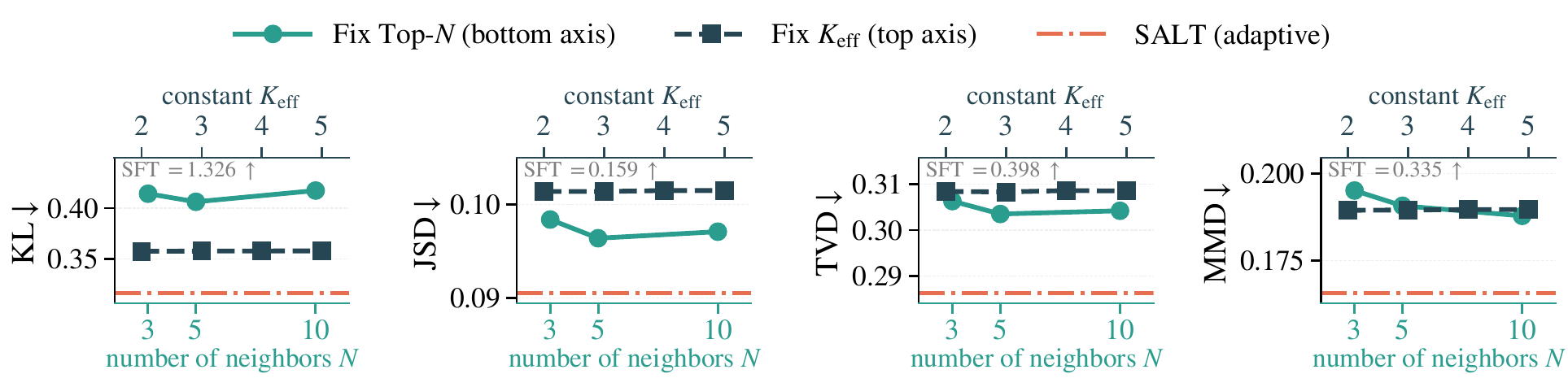}
\caption{Ablation over neighbor selection}
\end{subfigure}

\vspace{8pt}
\begin{subfigure}{\linewidth}
\centering
\includegraphics[width=\linewidth]{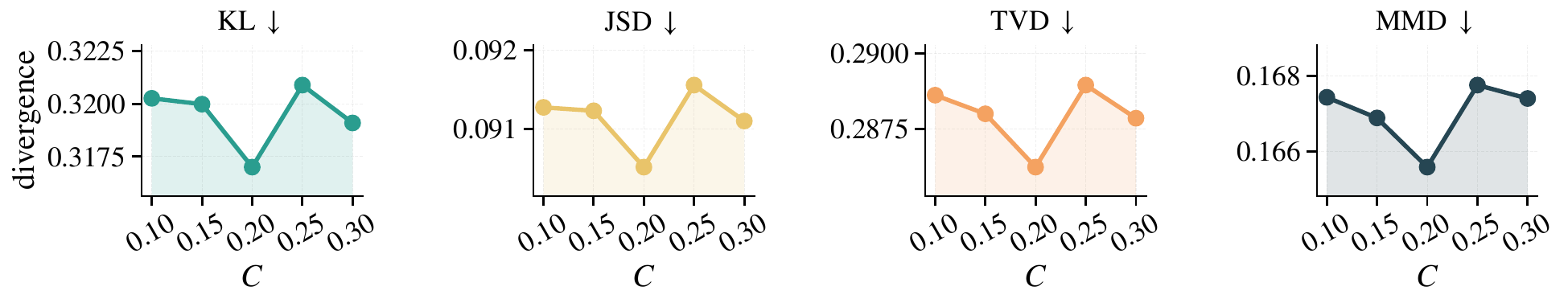}
\caption{Sensitivity to the radius scale $C$}
\end{subfigure}
\caption{Per-domain results on the \textbf{Social} domain, mirroring
\Cref{fig:subjectivity-levels}, \Cref{fig:ablation}, and
\Cref{fig:sensitivity-C} of the main text. All metrics are divergences
(lower is better).}
\label{fig:domain-social}
\end{figure}

\begin{figure}[p]
\centering
\begin{subfigure}{\linewidth}
\centering
\includegraphics[width=\linewidth]{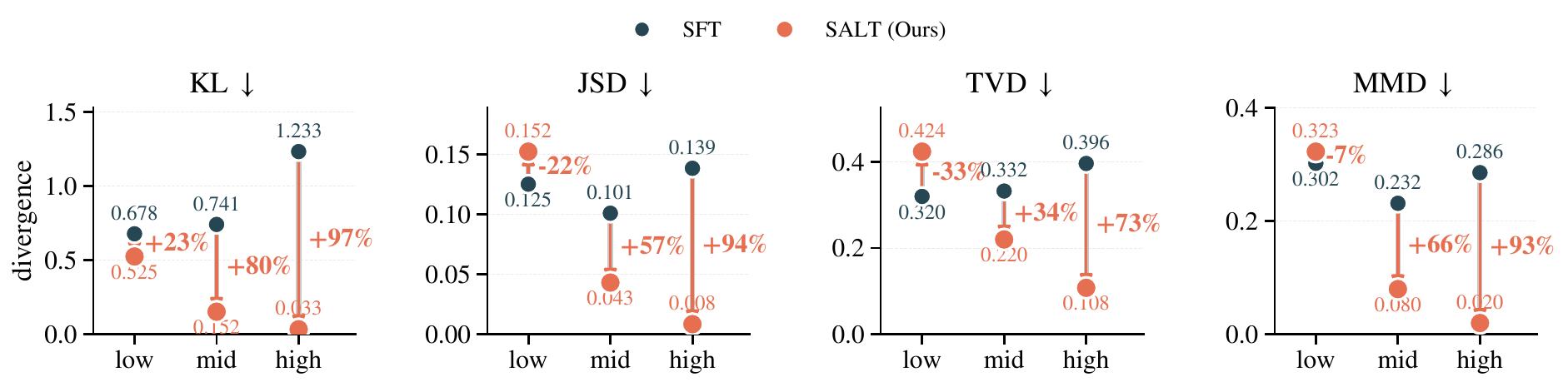}
\caption{Subjectivity-stratified comparison between SFT and SALT}
\end{subfigure}

\vspace{8pt}
\begin{subfigure}{\linewidth}
\centering
\includegraphics[width=\linewidth]{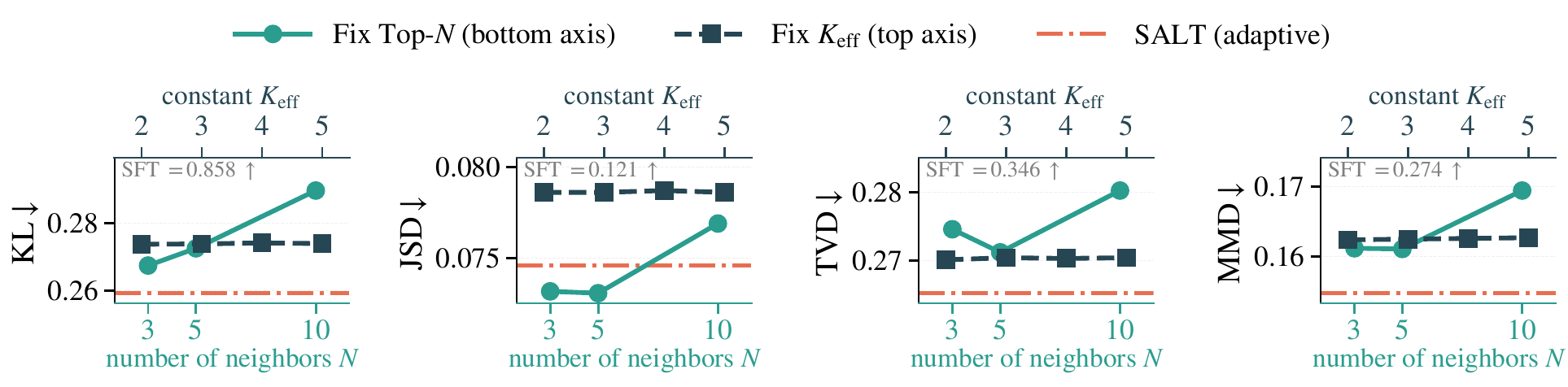}
\caption{Ablation over neighbor selection}
\end{subfigure}

\vspace{8pt}
\begin{subfigure}{\linewidth}
\centering
\includegraphics[width=\linewidth]{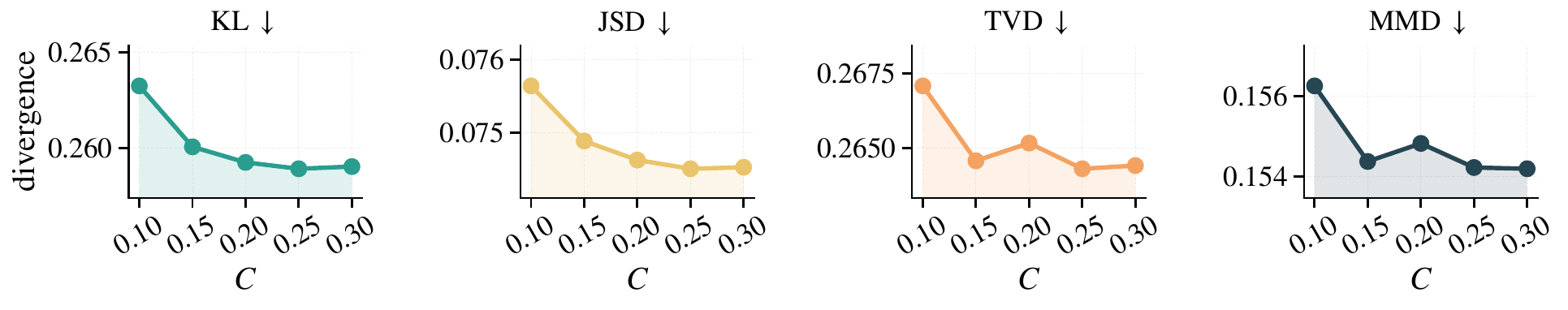}
\caption{Sensitivity to the radius scale $C$}
\end{subfigure}
\caption{Per-domain results on the \textbf{Culture} domain, mirroring
\Cref{fig:subjectivity-levels}, \Cref{fig:ablation}, and
\Cref{fig:sensitivity-C} of the main text. All metrics are divergences
(lower is better).}
\label{fig:domain-culture}
\end{figure}

\begin{figure}[p]
\centering
\begin{subfigure}{\linewidth}
\centering
\includegraphics[width=\linewidth]{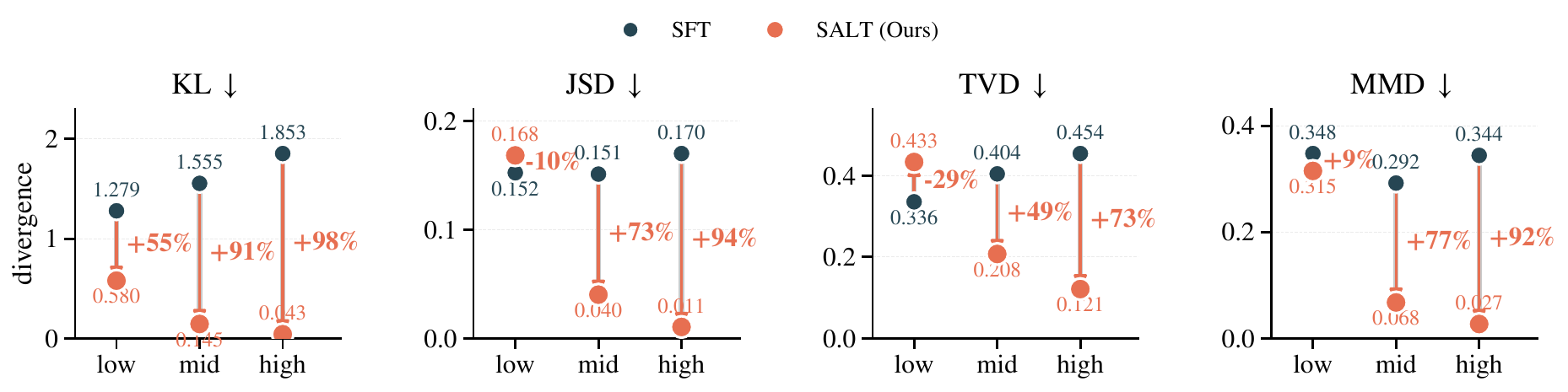}
\caption{Subjectivity-stratified comparison between SFT and SALT}
\end{subfigure}

\vspace{8pt}
\begin{subfigure}{\linewidth}
\centering
\includegraphics[width=\linewidth]{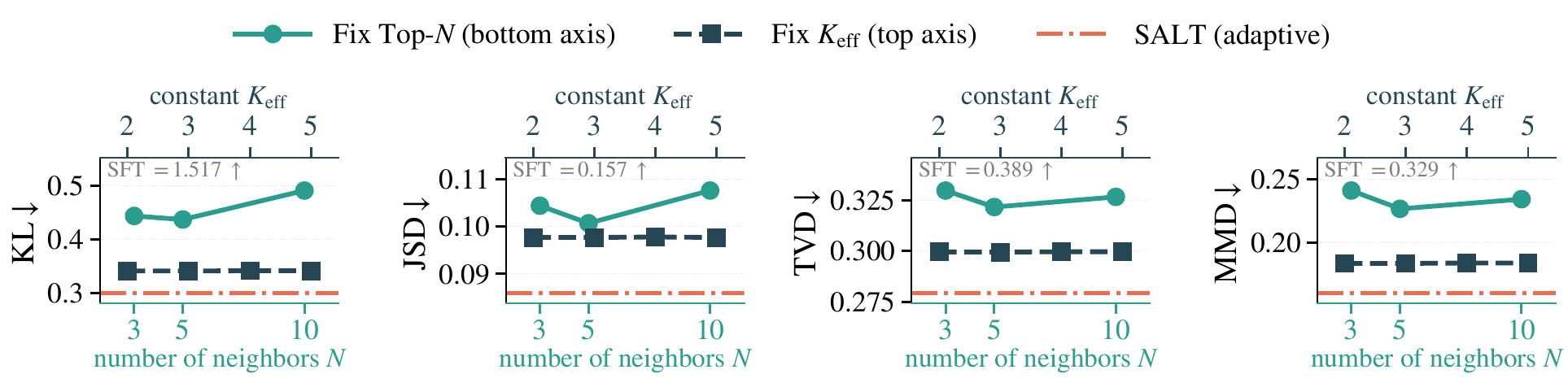}
\caption{Ablation over neighbor selection}
\end{subfigure}

\vspace{8pt}
\begin{subfigure}{\linewidth}
\centering
\includegraphics[width=\linewidth]{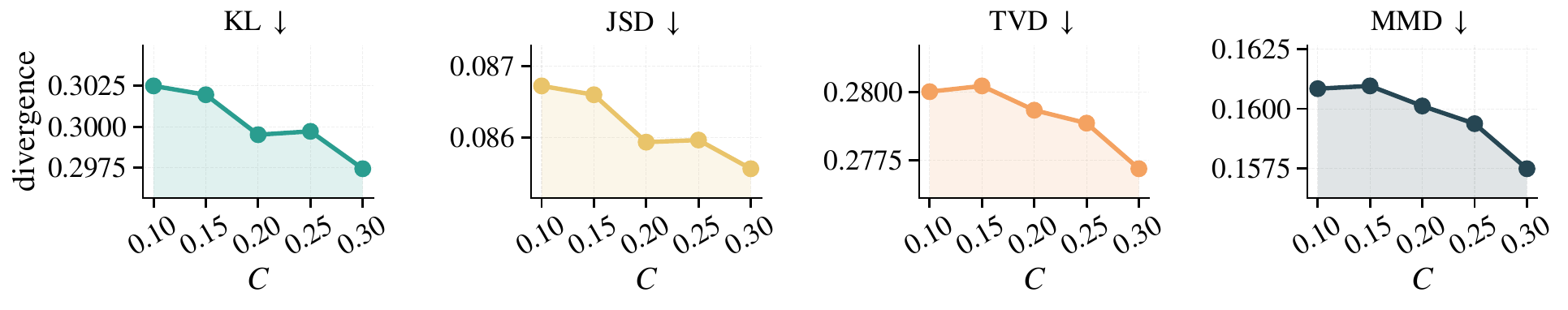}
\caption{Sensitivity to the radius scale $C$}
\end{subfigure}
\caption{Per-domain results on the \textbf{Health} domain, mirroring
\Cref{fig:subjectivity-levels}, \Cref{fig:ablation}, and
\Cref{fig:sensitivity-C} of the main text. All metrics are divergences
(lower is better).}
\label{fig:domain-health}
\end{figure}

\begin{figure}[p]
\centering
\begin{subfigure}{\linewidth}
\centering
\includegraphics[width=\linewidth]{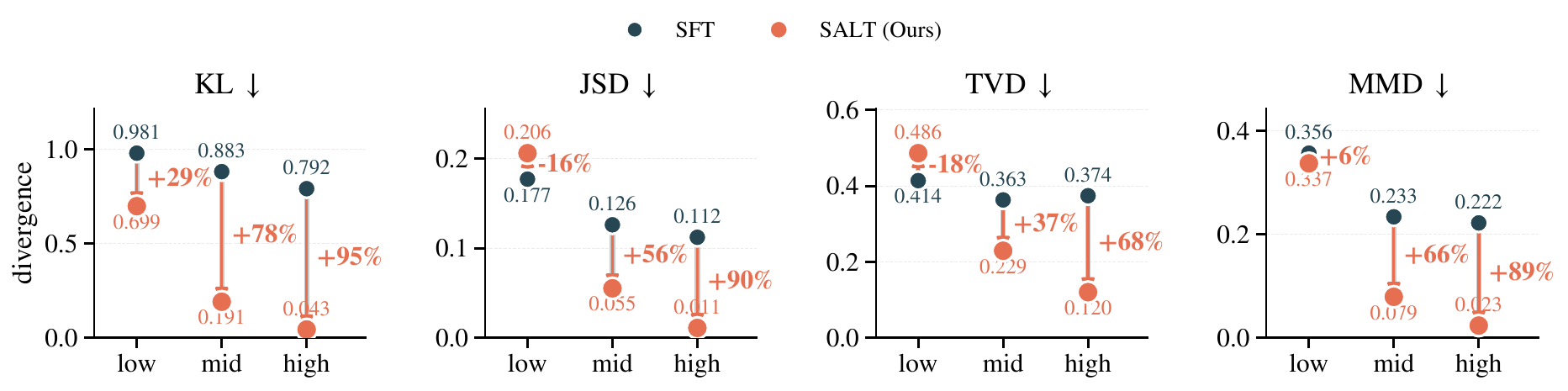}
\caption{Subjectivity-stratified comparison between SFT and SALT}
\end{subfigure}

\vspace{8pt}
\begin{subfigure}{\linewidth}
\centering
\includegraphics[width=\linewidth]{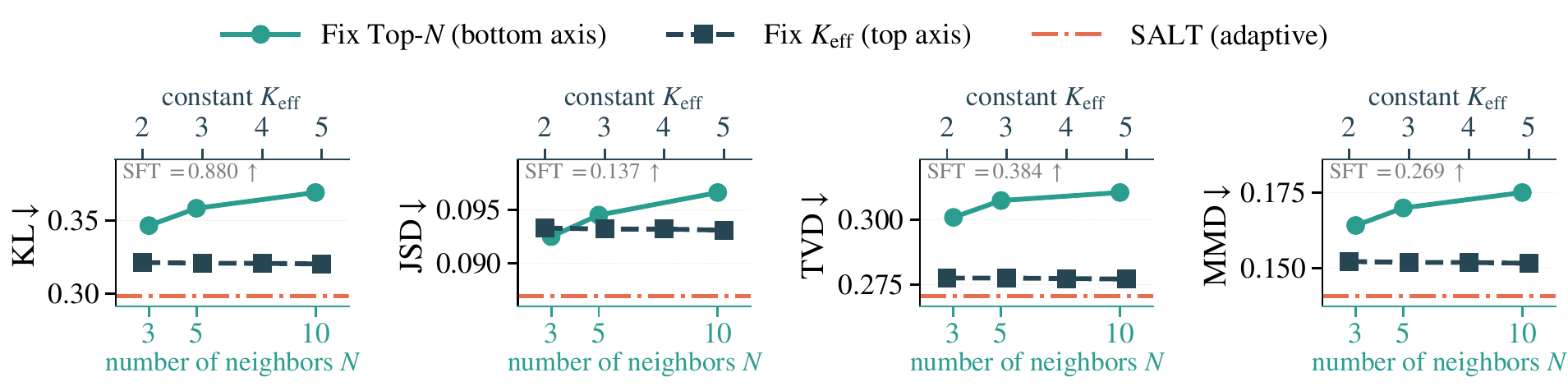}
\caption{Ablation over neighbor selection}
\end{subfigure}

\vspace{8pt}
\begin{subfigure}{\linewidth}
\centering
\includegraphics[width=\linewidth]{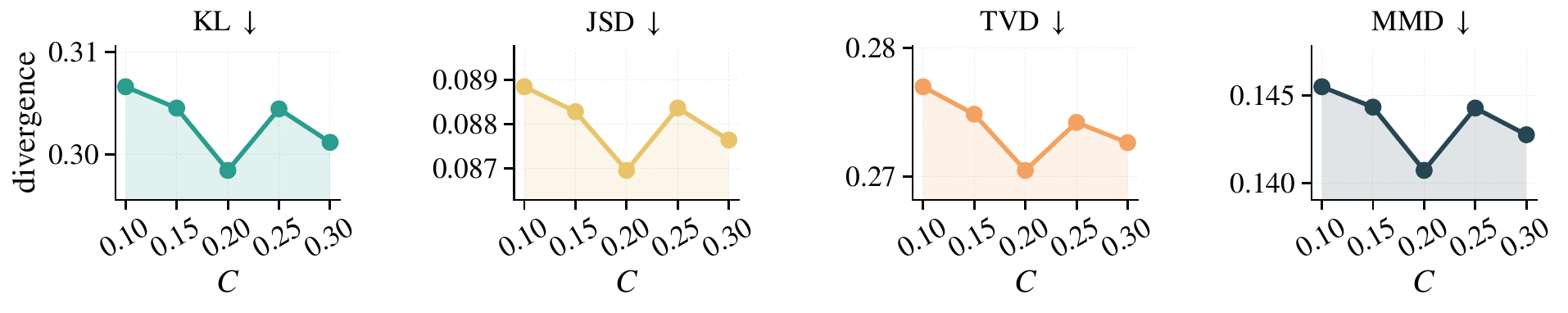}
\caption{Sensitivity to the radius scale $C$}
\end{subfigure}
\caption{Per-domain results on the \textbf{Environment} domain, mirroring
\Cref{fig:subjectivity-levels}, \Cref{fig:ablation}, and
\Cref{fig:sensitivity-C} of the main text. All metrics are divergences
(lower is better).}
\label{fig:domain-environment}
\end{figure}

\begin{figure}[p]
\centering
\begin{subfigure}{\linewidth}
\centering
\includegraphics[width=\linewidth]{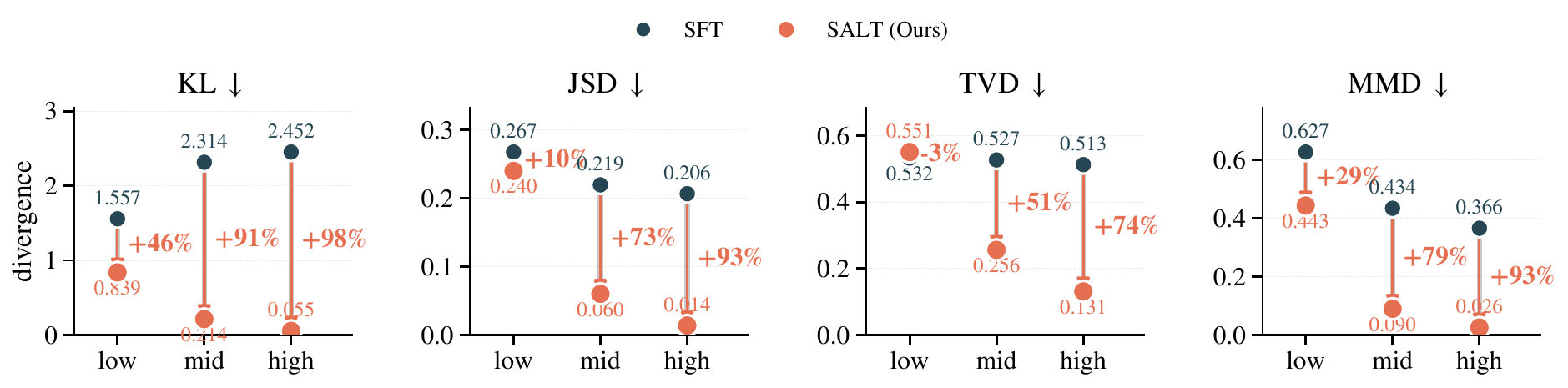}
\caption{Subjectivity-stratified comparison between SFT and SALT}
\end{subfigure}

\vspace{8pt}
\begin{subfigure}{\linewidth}
\centering
\includegraphics[width=\linewidth]{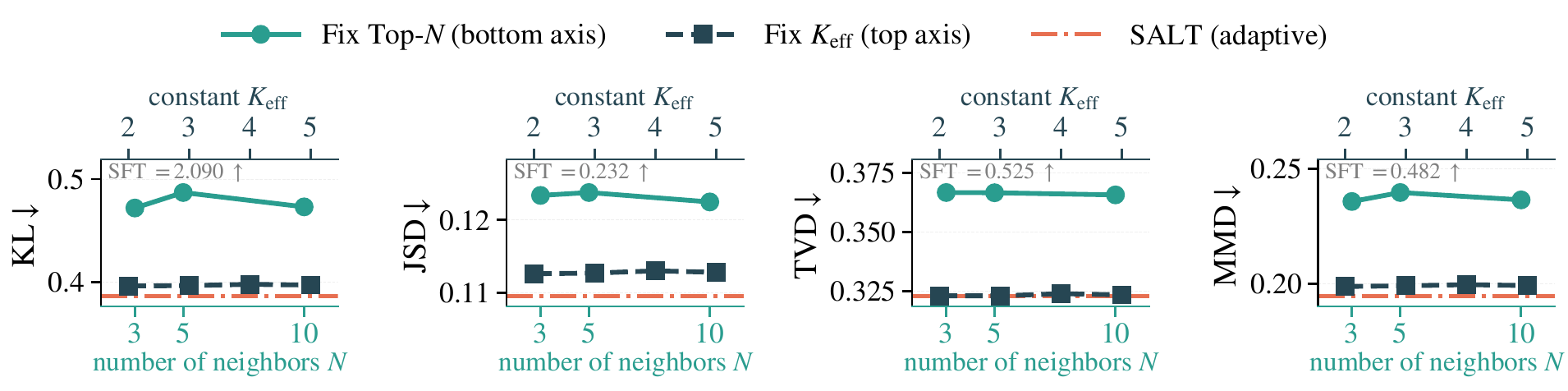}
\caption{Ablation over neighbor selection}
\end{subfigure}

\vspace{8pt}
\begin{subfigure}{\linewidth}
\centering
\includegraphics[width=\linewidth]{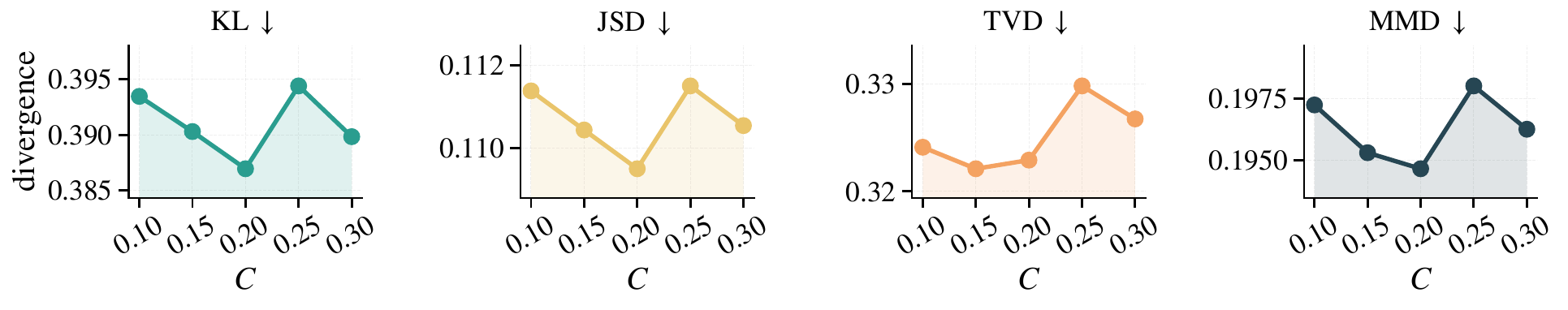}
\caption{Sensitivity to the radius scale $C$}
\end{subfigure}
\caption{Per-domain results on the \textbf{Education} domain, mirroring
\Cref{fig:subjectivity-levels}, \Cref{fig:ablation}, and
\Cref{fig:sensitivity-C} of the main text. All metrics are divergences
(lower is better).}
\label{fig:domain-education}
\end{figure}

\section{Statistical Significance}
\label{app:significance}

Evaluation is deterministic (\Cref{subsec:exp-setup}), so the only
source of uncertainty in the reported metrics is the finiteness of the
test set. We therefore quantify it with a paired bootstrap over test
contexts: we resample the 2,800 test contexts with replacement 1,000
times, apply the same resampled index set to every method, and recompute
each metric on every replicate. \Cref{tab:bootstrap-ci} reports the
point estimates with 95\% percentile intervals. The intervals of SALT
and every baseline are disjoint on every metric; testing the paired
differences directly, the 95\% interval of SALT minus each comparator
excludes zero on all four metrics, including the strongest ablation
variant (fixed $K_{\mathrm{eff}}$: KL difference $-0.0273$,
CI $[-0.0307, -0.0237]$). SALT's improvements are therefore
statistically significant rather than an artifact of the test split.

\begin{table}[t]
\centering
\caption{Point estimates and 95\% paired-bootstrap confidence intervals
on the full test set (1,000 resamples over contexts). All metrics are
divergences (lower is better).}
\label{tab:bootstrap-ci}
\small
\renewcommand{\arraystretch}{0.95}
\setlength{\tabcolsep}{3.5pt}
\begin{tabular}{@{}lcccc@{}}
\toprule
\textbf{Method} & \textbf{KL} & \textbf{JSD} & \textbf{TVD} & \textbf{MMD} \\
\midrule
Pretrained & $4.2801\ [4.1628, 4.3949]$ & $0.2851\ [0.2785, 0.2912]$ & $0.5745\ [0.5646, 0.5833]$ & $0.6350\ [0.6161, 0.6524]$ \\
SFT & $1.2871\ [1.2281, 1.3439]$ & $0.1524\ [0.1474, 0.1573]$ & $0.3953\ [0.3869, 0.4034]$ & $0.3194\ [0.3060, 0.3324]$ \\
DPO & $6.8819\ [6.7226, 7.0539]$ & $0.2690\ [0.2625, 0.2752]$ & $0.5497\ [0.5396, 0.5590]$ & $0.5774\ [0.5595, 0.5935]$ \\
PPO & $4.1393\ [4.0256, 4.2464]$ & $0.2588\ [0.2524, 0.2647]$ & $0.5403\ [0.5302, 0.5490]$ & $0.5578\ [0.5407, 0.5743]$ \\
DSA & $3.3904\ [3.2932, 3.4851]$ & $0.2343\ [0.2284, 0.2403]$ & $0.4937\ [0.4857, 0.5019]$ & $0.4418\ [0.4281, 0.4567]$ \\
\addlinespace[1pt]
SALT (Ours) & $\mathbf{0.2880}\ [0.2749, 0.3009]$ & $\mathbf{0.0825}\ [0.0788, 0.0863]$ & $\mathbf{0.2720}\ [0.2653, 0.2787]$ & $\mathbf{0.1510}\ [0.1440, 0.1582]$ \\
\midrule
Top-$N$ ($N{=}3$) & $0.3653\ [0.3497, 0.3809]$ & $0.0925\ [0.0886, 0.0964]$ & $0.3078\ [0.3011, 0.3144]$ & $0.1921\ [0.1833, 0.2014]$ \\
Fixed $K_{\mathrm{eff}}$ ($K{=}2$) & $0.3152\ [0.3017, 0.3290]$ & $0.0900\ [0.0861, 0.0939]$ & $0.2839\ [0.2769, 0.2909]$ & $0.1656\ [0.1586, 0.1734]$ \\
Global-Freq & $0.3879\ [0.3713, 0.4036]$ & $0.0933\ [0.0894, 0.0970]$ & $0.3065\ [0.2993, 0.3130]$ & $0.1886\ [0.1798, 0.1973]$ \\
\bottomrule
\end{tabular}
\end{table}

\FloatBarrier

\section{SubjSim Dataset Details}
\label{sec:dataset}

\paragraph{Why elicited propensities.}
Exact repeated measurement of the same person is not a viable route to
the distributional ground truth, since repetition can change memory,
reflection, fatigue, and demand effects. \textsc{SubjSim} therefore
operationalizes the target via elicited subjective response
propensities: respondents allocate plausibility across the same action
space, so the hidden distributional target is available for evaluation
while training methods receive only a single hard action.

\begin{figure*}[!t]
    \centering
    \includegraphics[width=\textwidth]{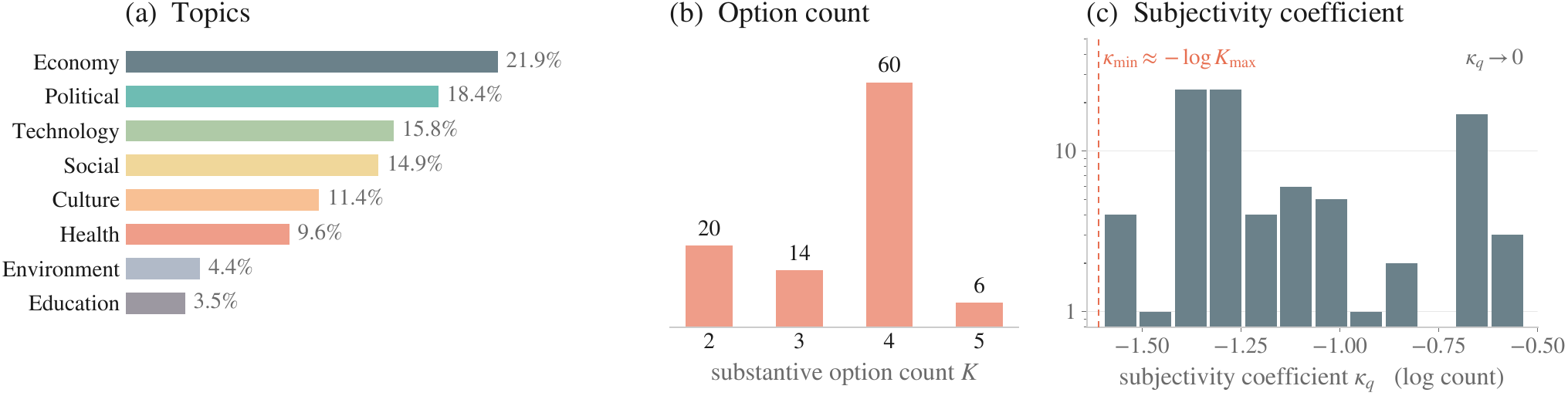}
    \caption{Overview of the SubjSim dataset. (a) Topics. (b) Option count $K$. (c) Subjectivity coefficient.}
    \label{fig:dataset_stats}
\end{figure*}

\subsection{Question Pool and Filtering}

\paragraph{Source Surveys.}
Questions are drawn from seven internationally standardized social survey
programs: the American Trends Panel (ATP), General Social Survey (GSS),
World Values Survey (WVS), American National Election Studies (ANES),
Chinese General Social Survey (CGSS), European Social Survey (ESS), and
International Social Survey Programme (ISSP). These programs were
selected for their broad topical coverage, institutional authority, and
diversity of question types. The initial candidate pool contains 5,964
items in total.
\paragraph{Translation.} All non-Chinese items are translated into Chinese using DeepSeek-chat \citep{liu2024deepseekv3}. CGSS items are retained in their original Chinese form.

\paragraph{Automated Screening.} Each candidate item is evaluated by DeepSeek-chat \citep{liu2024deepseekv3} along three dimensions:

\begin{itemize}
    \item \textbf{Cultural suitability:} whether the item is appropriate for Chinese respondents, considering cultural sensitivity, social norms, privacy boundaries, linguistic conventions, and regional variation.
    \item \textbf{Option ordinality:} whether the response options follow a logical order (ordinal) or are unordered categories (nominal). Only nominal items are retained, as ordinal scales introduce additional measurement assumptions that complicate distributional evaluation with metrics such as TVD and JSD.
    \item \textbf{Question objectivity:} whether the item has a factually verifiable answer (objective) or depends on personal attitudes, feelings, or preferences (subjective).
\end{itemize}

The system prompt used for automated screening is as follows:

\begin{tcolorbox}[colback=gray!10, colframe=gray!50, breakable, title=Screening Prompt]
\small\ttfamily
You are a professional survey design expert. Please perform a three-dimensional annotation analysis on the given survey question.

\textbf{Annotation Dimensions}

\textbf{1. Cultural Suitability}
\begin{itemize}
    \item Determine whether the question is appropriate for Chinese respondents.
    \item Consider: cultural sensitivity, social norms, privacy boundaries, linguistic conventions, and regional variation.
    \item Output: suitable / caution / unsuitable, with a brief explanation.
\end{itemize}

\textbf{2. Option Ordinality}
\begin{itemize}
    \item Determine whether the response options have a logical order.
    \item Ordinal: options exhibit a clear gradient, ranking, or sequence (e.g., very dissatisfied $\rightarrow$ very satisfied; 18--25 $\rightarrow$ 26--35 $\rightarrow$ 36+).
    \item Nominal: options are parallel with no inherent order (e.g., red/blue/green; football/basketball/swimming).
    \item Output: ordinal / nominal, with justification.
\end{itemize}

\textbf{3. Question Objectivity}
\begin{itemize}
    \item Determine whether the question is objective or subjective.
    \item Objective: the answer is factually verifiable and does not depend on personal feelings or opinions (e.g., age, household size, education level, occupation, home ownership, weekly exercise frequency).
    \item Subjective: the answer depends on personal attitudes, feelings, evaluations, or preferences (e.g., satisfaction, sense of identity, importance ratings, willingness, brand preference).
    \item Output: objective / subjective, with justification.
\end{itemize}
\end{tcolorbox}

After automated screening, 109 items are labeled as suitable, nominal,
and subjective. These items proceed to manual review. The remaining
1,012 suitable objective items serve as the pool for demographic
attribute dimensions.

\paragraph{Manual Review and Localization.}
We manually review the 109 subjective items, remove duplicates, verify
subjectivity, and adapt phrasing to the Chinese cultural context. This
yields the final 100 questions.

\paragraph{Demographic Attribute Dimensions.}
From the 1,012 objective items, we manually select those most relevant to
the 100 subjective questions, yielding 30 demographic attribute
dimensions.

\begin{table}[t]
\centering
\caption{Source distribution of the 100 subjective questions in SubjSim.}
\begin{tabular}{@{}lrr@{}}
\toprule
\textbf{Source} & \textbf{Count} & \textbf{Proportion} \\
\midrule
ATP  & 35 & 35.0\% \\
GSS  & 19 & 19.0\% \\
WVS  & 14 & 14.0\% \\
CGSS &  9 &  9.0\% \\
ANES &  9 &  9.0\% \\
ESS  &  9 &  9.0\% \\
ISSP &  5 &  5.0\% \\
\midrule
\textbf{Total} & \textbf{100} & \textbf{100\%} \\
\bottomrule
\end{tabular}
\end{table}

\FloatBarrier

\subsection{Question and Demographic Attribute Examples}

\begin{table*}[!t]
\centering
\caption{Examples of Demographic Attributes and Subjective Questions}
\label{app:full_examples}
\small
\renewcommand{\arraystretch}{1.15}
\setlength{\tabcolsep}{4pt}
\begin{tabularx}{\textwidth}{@{}p{1.9cm}XX@{}}
\toprule
\textbf{Domain} & \textbf{Question} & \textbf{Response Options} \\
\midrule
\rowcolor{gray!15}
\multicolumn{3}{@{}l}{\textbf{Demographic Attributes}} \\
\midrule
\multirow{3}{*}{Demographic} 
& What is your current academic year? & 1.~Freshman/Sophomore; 2.~Junior/Senior; 3.~Master's; 4.~PhD; 5.~Not~a~student \\
\cmidrule{2-3}
& Are you an only child? & 1.~Yes; 2.~No \\
\cmidrule{2-3}
& What is your current employment status? & 1.~Full-time; 2.~Part-time; 3.~Self-employed; 4.~Retired; 5.~Homemaker; 6.~Student; 7.~Unemployed; 8.~Other \\
\midrule
\rowcolor{gray!15}
\multicolumn{3}{@{}l}{\textbf{Subjective Questions}} \\
\midrule
\multirow{2}{*}{\parbox{1.5cm}{Political\\System}} 
& Over the next 30 years, which social trend concerns you the most? & 1.~AI replacing jobs; 2.~Social stratification; 3.~Misinformation; 4.~Weakening family structures \\
\cmidrule{2-3}
& Among occupational groups, which do you trust the most? & 1.~Healthcare/education; 2.~Law enforcement; 3.~Business; 4.~Non-profits \\
\midrule
\multirow{2}{*}{Education} 
& Who should ensure young people acquire skills for good jobs? & 1.~Government; 2.~Employers; 3.~Education system; 4.~Individuals \\
\cmidrule{2-3}
& Which quality is most important for children to learn? & 1.~Socially adept; 2.~Obedient; 3.~Hard work; 4.~Helpful; 5.~Independent thinking \\
\midrule
\multirow{2}{*}{\parbox{1.5cm}{Social\\Relations}} 
& What do you most often do in free time? & 1.~Social entertainment; 2.~Leisure; 3.~Self-improvement; 4.~Exercise \\
\cmidrule{2-3}
& Which is more important: considerate or proper behavior? & 1.~Considerate; 2.~Proper \\
\midrule
\multirow{2}{*}{\parbox{1.5cm}{Health \\Well-being}} 
& What is your view on vaccination? & 1.~Mandatory; 2.~Personal choice; 3.~Cautious; 4.~Not mandatory \\
\cmidrule{2-3}
& What is the biggest problem with the healthcare system? & 1.~Over-prescription; 2.~High cost; 3.~Drug safety; 4.~Uneven resources \\
\midrule
\multirow{2}{*}{\parbox{1.5cm}{Economy \\Labor}} 
& Do you prefer full-time employment? & 1.~Yes; 2.~No \\
\cmidrule{2-3}
& What is the biggest challenge if changing jobs? & 1.~Salary/benefits; 2.~Skill competitiveness; 3.~Few opportunities; 4.~Not difficult \\
\midrule
\multirow{2}{*}{\parbox{1.5cm}{Values \\Culture}} 
& Do you believe in life after death? & 1.~Yes; 2.~No \\
\cmidrule{2-3}
& Which value should society prioritize? & 1.~Equal opportunity; 2.~Individual freedom; 3.~Social order; 4.~Tradition \\
\midrule
\multirow{2}{*}{\parbox{1.5cm}{Technology\\\& Society}} 
& What is your view on genetically modified foods? & 1.~Healthier; 2.~More harmful; 3.~No difference \\
\cmidrule{2-3}
& Who should protect personal information online? & 1.~Companies; 2.~Individuals; 3.~Public institutions \\
\midrule
\multirow{2}{*}{\parbox{1.8cm}{Environment\\\& Energy}} 
& Which position do you lean toward on climate change? & 1.~Existential crisis; 2.~Politicized; 3.~Long-term issue; 4.~Natural cycles \\
\cmidrule{2-3}
& Regarding energy, what are you most concerned about? & 1.~Prices; 2.~Outages; 3.~Fossil fuel reliance; 4.~Natural disasters \\
\bottomrule
\end{tabularx}
\end{table*}

\FloatBarrier

\subsection{Annotator Demographics}
\label{subsec:annotator}
We recruited 193 annotators via a university online forum, including
both students and non-students. Figure~\ref{fig:demographics} summarizes
their distributions across age, gender, academic status, and field of
study. The pool skews toward young, university-educated individuals, with
a roughly balanced gender ratio and a mix of STEM and Humanities
backgrounds. This limits claims about population-level
representativeness, but it is less central to the benchmark's main
target: evaluating distributional alignment for individual behavioral
tendencies.

\begin{figure*}[!t]
\centering
\includegraphics[width=\textwidth]{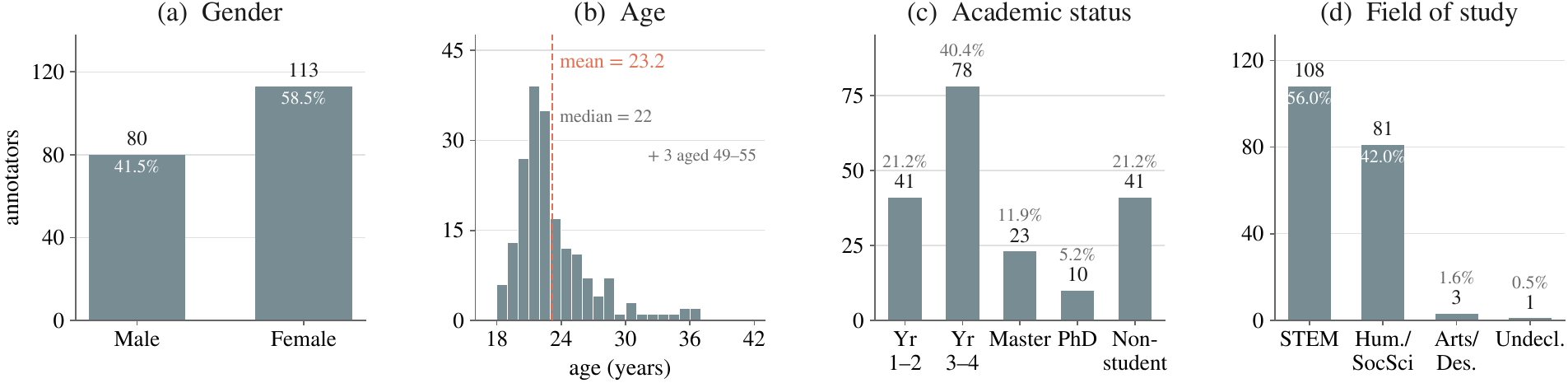}
\caption{Demographic distributions of annotators across gender, age, academic status, and field of study.}
\label{fig:demographics}
\end{figure*}

\FloatBarrier

\subsection{Probability Ball Allocation Protocol}
\label{subsec:ball_allocation}
\begin{figure*}[!t]
\centering
\includegraphics[width=0.85\textwidth]{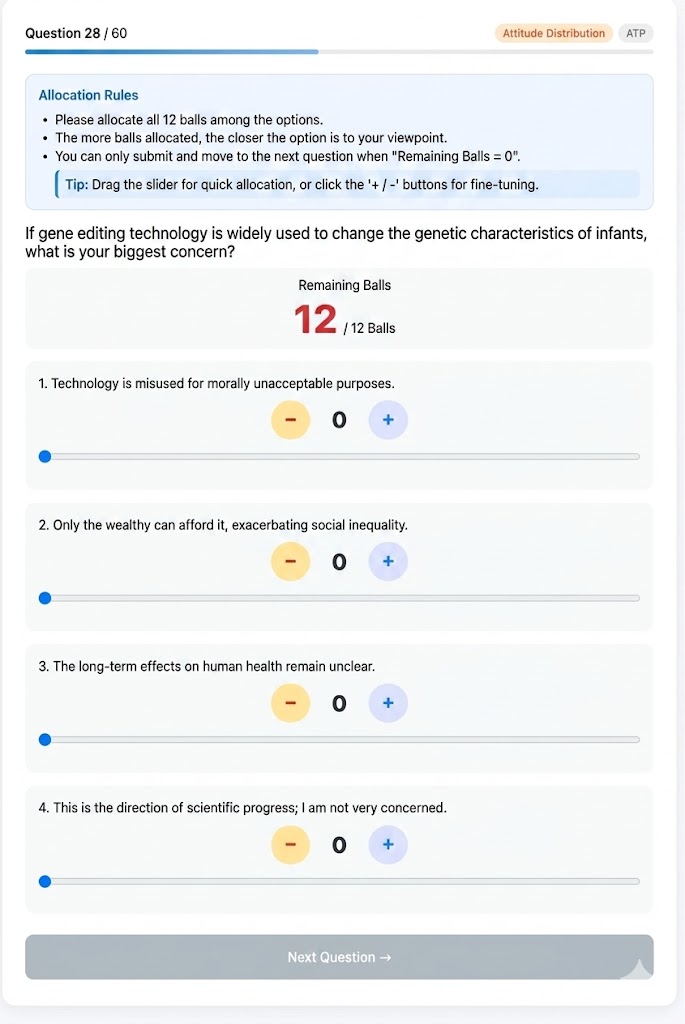}
\caption{Screenshot of the annotation interface. The platform was deployed in Chinese for native Chinese-speaking annotators; the interface shown here is an English translation for presentation purposes.}
\label{fig:interface}
\end{figure*}

In the probability-ball protocol, annotators distribute a fixed number of
balls across all available options and must allocate all balls before
submission. The fraction assigned to each option represents the
annotator's subjective probability for that option. This avoids known
issues with Likert-scale ratings such as scale bias and cross-item
incomparability.

The number of balls scales with $K$ to balance resolution and cognitive load: 10 balls for $K=2$, 12 for $K=3$ and $K=4$, 15 for $K=5$, and 20 otherwise.

We implemented a web-based annotation platform where annotators adjust
ball counts per option via sliders or $+/-$ buttons. The remaining ball
count is displayed in real time, and submission is blocked until all
balls are allocated. Figure~\ref{fig:interface} shows a screenshot of
the interface.

\FloatBarrier

\end{document}